\begin{document}
\title{Reinforcement Learning to Rank in E-Commerce Search Engine: Formalization, Analysis, and Application}

\author{Yujing Hu}
\affiliation{\institution{Alibaba Group} 
\state{Hangzhou}
\country{China}}
\email{yujing.hyj@alibaba-inc.com}

\author{Qing Da}
\affiliation{\institution{Alibaba Group}
\state{Hangzhou}
\country{China}}
\email{daqing.dq@alibaba-inc.com}

\author{Anxiang Zeng}
\affiliation{\institution{Alibaba Group}
\state{Hangzhou}
\country{China}}
\email{renzhong@taobao.com}

\author{Yang Yu}
\affiliation{
\institution{National Key Laboratory for \\ Novel Software Technology, \\ Nanjing University}
\state{Nanjing}
\country{China}}
\email{yuy@nju.edu.cn}

\author{Yinghui Xu}
\affiliation{\institution{Artificial Intelligence Department, Zhejiang Cainiao Supply Chain Management Co., Ltd.}
\state{Hangzhou}
\country{China}}
\email{renji.xyh@taobao.com}

\copyrightyear{2018} 
\acmYear{2018} 
\setcopyright{acmcopyright}
\acmConference[KDD '18]{The 24th ACM SIGKDD International Conference on Knowledge Discovery \& Data Mining}{August 19--23, 2018}{London, United Kingdom}
\acmPrice{15.00}
\acmDOI{10.1145/3219819.3219846}
\acmISBN{978-1-4503-5552-0/18/08}
\settopmatter{printacmref=true} 
\fancyhead{} 

\begin{abstract}
In E-commerce platforms such as \textit{Amazon} and \textit{TaoBao}, ranking items in a search session is a typical multi-step decision-making problem. Learning to rank (LTR) methods have been widely applied to ranking problems. However, such methods often consider different ranking steps in a session to be independent, which conversely may be highly correlated to each other. For better utilizing the correlation between different ranking steps, in this paper, we propose to use reinforcement learning (RL) to learn an optimal ranking policy which maximizes the expected accumulative rewards in a search session. Firstly, we formally define the concept of search session Markov decision process (SSMDP) to formulate the multi-step ranking problem. Secondly, we analyze the property of SSMDP and theoretically prove the necessity of maximizing accumulative rewards. Lastly, we propose a novel policy gradient algorithm for learning an optimal ranking policy, which is able to deal with the problem of high reward variance and unbalanced reward distribution of an SSMDP. Experiments are conducted in simulation and \textit{TaoBao} search engine. The results demonstrate that our algorithm performs much better than the state-of-the-art LTR methods, with more than $40\%$ and $30\%$ growth of total transaction amount in the simulation and the real application, respectively.
\end{abstract}

%
%


\keywords{reinforcement learning; online learning to rank; policy gradient}

\maketitle

\section{Introduction}
Over past decades, shopping online has become an important part of people's daily life, requiring the E-commerce giants like \textit{Amazon}, \textit{eBay} and \textit{TaoBao} to provide stable and fascinating services for hundreds of millions of users all over the world. Among these services, commodity search is the fundamental infrastructure of these E-commerce platforms, affording users the opportunities to search commodities, browse product information and make comparisons. For example, every day millions of users choose to purchase commodities through \textit{TaoBao} search engine.

In this paper, we focus on the problem of ranking items in large-scale item search engines, which refers to assigning each item a score and sorting the items according to their scores. Generally, a search session between a user and the search engine is a multi-step ranking problem as follows:
\begin{enumerate}
\item the user inputs a query in the blank of the search engine, 
\item the search engine ranks the items related to the query and displays the top $K$ items (e.g., $K = 10$) in a page,
\item  the user makes some operations (e.g., click items, buy some certain item or just request a new page of the same query) on the page, 
\item when a new page is requested, the search engine reranks the rest of the items and display the top $K$ items.
\end{enumerate}
These four steps will repeat until the user buys some items or just leaves the search session. Empirically, a successful transaction always involves multiple rounds of the above process.

The operations of users in a search session may indicate their personal intentions and preference on items. From a statistical view, these signals can be utilized to learn a ranking function which satisfies the users' demand. This motivates the marriage of machine learning and information retrieval, namely the learning to rank (LTR) methods \cite{joachims2002optimizing,liu2009learning}, which learns a ranking function by classification or regression from training data. The major paradigms of supervised LTR methods are pointwise \cite{nallapati2004discriminative,li2008mcrank}, pairwise \cite{cao2006adapting,burges2005learning}, and listwise \cite{cao2007learning}. Recently, online learning techniques such as regret minimization \cite{auer2002using,langford2008epoch,kveton2015cascading} have been introduced into the LTR domain for directly learning from user signals. Compared with offline LTR, online LTR avoids the mismatch between manually curated labels, user intent \cite{yue2009interactively} and the expensive cost of creating labeled data sets. Although rigorous mathematical models are adopted for problem formalization \cite{yue2009interactively,kveton2015cascading,zoghi2017online} and guarantees on regret bounds are established, most of those works only consider a one-shot ranking problem, which means that the interaction between the search engine and each user contains only one round of ranking-and-feedback activity. However, in practice, a search session often contains multiple rounds of interactions and the sequential correlation between each round may be an important factor for ranking, which has not been well investigated.

In this paper, we consider the multi-step sequential ranking problem mentioned above and propose a novel reinforcement learning (RL) algorithm for learning an optimal ranking policy. The major contributions of this paper are as follows. 
\begin{itemize}
\item We formally define the concept of search session Markov decision process (SSMDP) to formulate the multi-step ranking problem, by identifying the state space, reward function and state transition function.
\item We theoretically prove that maximizing accumulative rewards is necessary, indicating that the different ranking steps in a session are tightly correlated rather than independent.
\item We propose a novel algorithm named deterministic policy gradient with full backup estimation (DPG-FBE), designed for the problem of high reward variance and unbalanced reward distribution of SSMDP, which could be hardly dealt with even for existing state-of-the-art RL algorithms.
\item We empirically demonstrate that our algorithm performs much better than online LTR methods, with more than $40\%$ and $30\%$ growth of total transaction amount in the simulation and the \textit{TaoBao} application, respectively.
\end{itemize}

The rest of the paper is organized as follows.  Section~\ref{Sec:2} introduces the background of this work. The problem description, analysis of SSMDP and the proposed algorithm are stated in Section ~\ref{Sec:3}, ~\ref{Sec:4}, ~\ref{Sec:5}, respectively. The experimental results are shown in Section ~\ref{Sec:6}, and Section~\ref{Sec:7} concludes the paper finally.

\section{Background} \label{Sec:2}
In this section, we briefly review some key concepts of reinforcement learning and the related work in the online LTR domain. We start from the reinforcement learning part.

\subsection{Reinforcement Learning} \label{SubSec:2.1}
Reinforcement learning (RL) \cite{sutton1998reinforcement} is a learning technique that an agent learns from the interactions between the environment by trial-and-error. The fundamental mathematical model of reinforcement learning is Markov decision process (MDP).

\begin{definition} [Markov Decision Process] \label{Def:MDP}
A Markov decision process is a tuple $\mathcal{M} = \langle \mathcal{S}, \mathcal{A}, \mathcal{R}, \mathcal{P}, \gamma \rangle $, where $\mathcal{S}$ is the state space, $\mathcal{A}$ is the action space of the agent, $\mathcal{R}: \mathcal{S} \times \mathcal{A} \times \mathcal{S} \rightarrow \mathbb{R}$ is the reward function, $\mathcal{P}: \mathcal{S} \times \mathcal{A} \times \mathcal{S} \rightarrow [0,1]$ is the state transition function and $\gamma \in [0,1]$ is the discount rate. 
\end{definition}

The objective of an agent in an MDP is to find an optimal policy which maximizes the expected accumulative rewards starting from any state $s$ (typically under the infinite-horizon discounted setting), which is defined by $V^*(s) = \max_{\pi} \mathbb{E}^{\pi} \big\{ \sum_{k=0}^{\infty} \gamma^{k} r_{t+k} \big| s_t = s \big\}$, where $\pi: \mathcal{S} \times \mathcal{A} \rightarrow [0,1]$ denotes any policy of the agent, $\mathbb{E}^{\pi}$ stands for expectation under policy $\pi$, $t$ is the current time step, $k$ is a future time step, and $r_{t+k}$ is the immediate reward at the time step $(t+k)$. This goal is equivalent to finding the optimal state-action value $Q^*(s,a) = \max_{\pi} \mathbb{E}^{\pi} \Big\{ \sum_{k=0}^{\infty} \gamma^{k} r_{t+k} \big| s_t = s, a_t = a \Big\}$ for any state-action pair $(s,a)$. In finite-horizon setting with a time horizon $T$, the objective of an agent can be reinterpreted as the finding the optimal policy which maximizes the expected $T$-step discounted return $\mathbb{E}^{\pi} \big\{ \sum_{k=0}^{T} \gamma^{k} r_{t+k} \big| s_t = s \big\}$ or undiscounted return $\mathbb{E}^{\pi} \big\{ \sum_{k=0}^{T} r_{t+k} \big| s_t = s \big\}$\footnote{The undiscounted return is a special case in discount setting with $\gamma=1$.} in the discounted and undiscounted reward cases, respectively.

An optimal policy can be found by computing the optimal state-value function $V^*$ or the optimal state-action value function $Q^*$. Early methods such as dynamic programming \cite{sutton1998reinforcement} and temporal-difference learning \cite{watkins1989learning} rely on a table to store and compute the value functions. However, such tabular methods cannot scale up in large-scale state/action space problems due to the curse of dimensionality. Function approximation is widely used to address the scalability issues of RL. By using a parameterized function (e.g., linear functions \cite{MaeiSBS10}, neural networks \cite{mnih2015human,silver2016mastering}) to represent the value function or the policy (a.k.a value function approximation and policy gradient method respectively), the learning problem is transformed to optimizing the function parameters according to reward signals.
In recent years, policy gradient methods \cite{sutton2000policy,silver2014deterministic,schulman2015trust} have drawn much attention in the RL domain. The explicit parameterized representation of policy enables the learning agent to directly search in the policy space and avoids the policy degradation problem of value function approximation. 

\begin{figure*}
  \centering
  \includegraphics[scale=0.4]{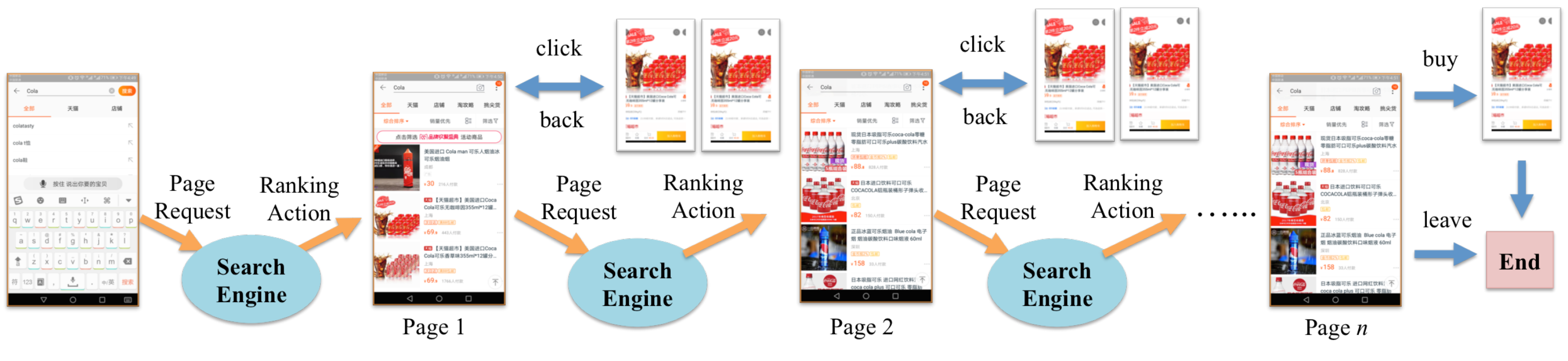}
  \vspace{-2mm}
  \caption{A typical search session in TaoBao. A user starts a session from a query, and has multiple actions to choose, including clicking into an item description, buying an item, turning to the next page, and leaving the session. \label{Fig:TaoBao_Search}}
\end{figure*}

\subsection{Related Work} \label{SubSec:2.2}
Early attempt of online LTR can be dated back to the evaluation of RankSVM in online settings \cite{joachims2002optimizing}. As claimed by \citeauthor{hofmann2013balancing}, balancing exploitation and exploration should be a key ability of online LTR methods \cite{hofmann2013balancing}. The theoretical results in the online learning community (typically in the bandit problem domain) \cite{auer2002using,langford2008epoch} provide rich mathematical tools for online LTR problem formalization and algorithms for efficient exploration, which motivates a lot of online LTR methods. In general, these methods can be divided into two groups. The first is to learn the best ranking function from a function space \cite{yue2009interactively,hofmann2013balancing}. For example, \citeauthor{yue2009interactively} \cite{yue2009interactively} define a dueling bandit problem in which actions are pairwise comparisons between documents and the goal is to learn a parameterized retrieval function which has sublinear regret performance. The second groups of online LTR methods directly learn the best list under some model of user interactions \cite{radlinski2008learning,slivkins2013ranked}, which can be treated as an assumption on how users act to a ranked list. Representative models include the cascade model \cite{kveton2015cascading,kveton2015combinatorial,zong2016cascading,li2016contextual}, the dependent-click model \cite{katariya2017stochastic}, and the position-based model \cite{lagree2016multiple}. Since no single model can entirely capture the behavior of all users, \citeauthor{zoghi2017online} \cite{zoghi2017online} recently propose a stochastic click learning framework for online LTR in a broad class of click models.

Our work in this paper is more similar to the first group of online LTR methods which learn ranking functions. However, while most of previous works consider a one-shot ranking problem, we focus on learning a ranking policy in a multi-step ranking problem, which contains multiple rounds of interactions and typically occurs in E-commerce scenarios.

\section{Problem Formulation} \label{Sec:3}
As we mentioned in previous sections, in E-commerce platforms such as \textit{TaoBao} and \textit{TMall}, ranking items given a query is a multi-step decision-making problem, where the search engine should take a ranking action whenever an item page is requested by a user. Figure (\ref{Fig:TaoBao_Search}) shows a typical search session between the search engine and a mobile app user in \textit{TaoBao}. In the beginning, the user inputs a query ``Cola'' into the blank of the search engine and clicks the ``Search'' button. Then the search engine takes a ranking action and shows the top items related to ``Cola'' in page 1. The user browses the displayed items and clicks some of them for the details. When no items interest the user or the user wants to check more items for comparisons, the user requests a new item page. The search engine again takes a ranking action and displays page 2. After a certain number of such ranking rounds, the search session will finally end when the user purchases items or just leaves the search session. 

\subsection{Search Session Modeling} \label{SubSec:3.1}
Before we formulate the multi-step ranking problem as an MDP, we define some concepts to formalize the contextual information and user behaviours in a search session, which are the basis for defining the state and state transitions of our MDP.

\begin{definition} [Top $K$ List] \label{Def:TopKList}
For an item set $\mathcal{D}$, a ranking function $f$, and a positive integer $K$ ($1 \leq K \leq |\mathcal{D}|$), the top $K$ list $\mathcal{L}_K(\mathcal{D}, f)$ is an ordered item list $(\mathcal{I}_1, \mathcal{I}_2, ..., \mathcal{I}_K)$ which contains the top $K$ items when applying the rank function $f$ to the item set $\mathcal{D}$, where $\mathcal{I}_k$ ($1 \leq k \leq K$) is the item in position $k$ and for any $k' \geq k$, it is the case that $f(\mathcal{I}_k) > f(\mathcal{I}_{k'})$.
\end{definition}

\begin{definition} [Item Page] \label{Def:ItemPage}
For each step $t$ ($t \geq 1$) during a session, the item page $p_{t}$ is the top $K$ list $\mathcal{L}_K(\mathcal{D}_{t-1}, a_{t-1})$ resulted by applying the ranking action $a_{t-1}$ of the search engine to the set of unranked items $\mathcal{D}_{t-1}$ in the last decision step $(t-1)$. For the initial step $t = 0$, $\mathcal{D}_0 = \mathcal{D}$. For any decision step $t \geq 1$, $\mathcal{D}_t = \mathcal{D}_{t-1} \setminus p_{t}$.
\end{definition}

\begin{definition} [Item Page History] \label{Def:ItemPageHistory}
In a search session, let $q$ be the input query. For the initial decision step $t = 0$, the initial item page history $h_0 = q$. For each later decision step $t \geq 1$, the item page history up to $t$ is $h_t = h_{t-1} \cup \{p_t\}$, where $h_{t-1}$ is the item page history up to the step $(t-1)$ and $p_t$ is the item page of step $t$.
\end{definition}

The item page history $h_t$ contains all information the user observes at the decision step $t$ $(t \geq 0)$. Since the item set $\mathcal{D}$ is finite, there are at most $\lceil \frac{|\mathcal{D}|}{K} \rceil$ item pages, and correspondingly at most $\lceil \frac{|\mathcal{D}|}{K} \rceil$ decision steps in a search session. In \textit{TaoBao} and \textit{TMall},  users may choose to purchase items or just leave at different steps of a session. If we treat all possible users as an environment which samples user behaviors, this would mean that after observing any item page history, the environment may terminate a search session with a certain probability of transaction conversion or abandonment. We formally define such two types of probability as follows.

\begin{definition} [Conversion Probability] \label{Def:ConversionProb}
For any item page history $h_t$ ($t > 0$) in a search session, let $B(h_t)$ denote the conversion event that a user purchases an item after observing $h_t$. The conversion probability of $h_t$, which is denoted by $b(h_t)$, is the averaged probability that $B(h_t)$ occurs when $h_t$ takes place.
\end{definition}

\begin{definition} [Abandon Probability] \label{Def:AbandonProb}
For any item page history $h_t$ ($t > 0$) in a search session, let $L(h_t)$ denote the abandon event that a user leaves the search session after observing $h_t$. The abandon probability of $h_t$, which is denoted by $l(h_t)$, is the averaged probability that $L(h_t)$ occurs when $h_t$ takes place.
\end{definition}

Since $h_t$ is the direct result of the agent's action $a_{t-1}$ in the last item page history $h_{t-1}$, the conversion probability $b(h_t)$ and the abandon probability $l(h_t)$ define how the state of the environment (i.e., the user population) will change after $a_{t-1}$ is taken in $h_{t-1}$: (1) terminating the search session by purchasing an item in $h_t$ with probability $b(h_t)$; (2) leaving the search session from $h_t$ with probability $l(h_t)$; (3) continuing the search session from $h_t$ with probability $(1 - b(h_t) - l(h_t))$. For convenience, we also define the continuing probability of an item page history.

\begin{definition} [Continuing Probability] \label{Def:ContProb}
For any item page history $h_t$ ($t \geq 0$) in a search session, let $C(h_t)$ denote the continuation event that a user continues searching after observing $h_t$. The continuing probability of $h_t$, which is denoted by $c(h_t)$, is the averaged probability that $C(h_t)$ occurs when $h_t$ takes place.
\end{definition}

Obviously, for any item page history $h$, it holds that $c(h) = 1 - b(h) - l(h)$. Specially, the continuation event of the initial item page history $h_0$ which only contains the query $q$ is a sure event (i.e., $c(h_0) = 1$) as neither a conversion event nor a abandon event can occur before the first item page is displayed.  

\subsection{Search Session MDP}
Now we are ready to define the instantiated Markov decision process (MDP) for the multi-step ranking problem in a search session, which we call a search session MDP (SSMDP). 
\begin{definition} [Search Session MDP] \label{Def:SSMDP}
Let $q$ be a query, $\mathcal{D}$ be the set of items related to $q$, and $K$ ($K > 0$) be the number of items that can be displayed in a page, the search session MDP (SSMDP) with respect to $q$, $\mathcal{D}$ and $K$ is a tuple $\mathcal{M} = \langle T, \mathcal{H}, \mathcal{S}, \mathcal{A}, \mathcal{R}, \mathcal{P} \rangle$, where 
\begin{itemize}         
\item[*] $T = \lceil \frac{|\mathcal{D}|}{K} \rceil$ is the maximal decision step of a search session,
\item[*] $\mathcal{H} = \bigcup_{t=0}^T \mathcal{H}_t$ is the set of all possible item page histories, $\mathcal{H}_t$ is the set of all item page histories up to $t$ ($0 \leq t \leq T$). 
\item[*] $\mathcal{S} = \mathcal{H}_C \bigcup \mathcal{H}_B \bigcup \mathcal{H}_L$ is the state space, $\mathcal{H}_C = \{ C(h_t) | \forall h_t \in \mathcal{H}_t, 0 \leq t < T \}$ is the nonterminal state set that contains all continuation events, $\mathcal{H}_B = \{ B(h_t) | \forall h_t \in \mathcal{H}_t, 0 < t \leq T \}$ and $\mathcal{H}_L = \{ L(h_t) | \forall h_t \in \mathcal{H}_t, 0 < t \leq T \}$ are two terminal state sets which contain all conversion events and all abandon events, respectively.
\item[*] $\mathcal{A}$ is the action space which contains all possible ranking functions of the search engine.
\item[*] $\mathcal{R} : \mathcal{H}_C \times \mathcal{A} \times \mathcal{S} \rightarrow \mathbb{R}$ is the reward function.
\item[*] $\mathcal{P} : \mathcal{H}_C \times \mathcal{A} \times \mathcal{S} \rightarrow [0,1]$ is the state transition function. For any step $t$ ($0 \leq t < T$), any item page history $h_t \in \mathcal{H}_t$, any action $a \in \mathcal{A}$, let $h_{t+1} = (h_t, \mathcal{L}_K(\mathcal{D}_t, a))$. The transition probability from the nonterminal state $C(h_t)$ to any state $s' \in \mathcal{S}$ after taking action $a$ is
\begin{equation} \label{Eq:StateTransition}
\mathcal{P}(C(h_t), a, s') = \left\{
\begin{aligned}
&b(h_{t+1}) &\text{if } s' = B(h_{t+1}), \\
&l(h_{t+1}) &\text{if } s' = L(h_{t+1}), \\
&c(h_{t+1}) &\text{if } s' = C(h_{t+1}), \\
&0 &\text{otherwise.} \quad\quad
\end{aligned}
\right.
\end{equation}
\end{itemize}
\end{definition}

In an SSMDP, the agent is the search engine and the environment is the population of all possible users. The states of the environment are indication of user status in the corresponding item page histories (i.e., contiuation, abandonment, or transaction conversion). The action space $\mathcal{A}$ can be set differently (e.g., discrete or continuous) according to  specific ranking tasks. The state transition function $\mathcal{P}$ is directly based on the conversion probability and abandon probability.
The reward function $\mathcal{R}$ highly depends on the goal of a specific task, we will discuss our reward setting in Section \ref{Sec:4.2}.

\section{Analysis of SSMDP} \label{Sec:4}
Before we apply the search session MDP (SSMDP) model in practice, some details need to be further clarified. In this section, we first identify the Markov property of the states in an SSMDP to show that SSMDP is well defined. Then we provide a reward function setting for SSMDP, based on which we perform an analysis on the reward discount rate and show the necessity for a search engine agent to maximize long-time accumulative rewards.

\subsection{Markov Property} \label{Sec:4.1}
The Markov property means that a state is able to summarize past sensations compactly in such a way that all relevant information is retained \cite{sutton1998reinforcement}. Formally, the Markov property refers to that for any state-action sequence $s_0, a_0, s_1, a_1, s_2, ..., s_{t-1}, a_{t-1}, s_t$ experienced in an MDP, it holds that 
\begin{equation}
\text{Pr}(s_t | s_0, a_0, s_1, a_1, ..., s_{t-1}, a_{t-1}) = \text{Pr}(s_t | s_{t-1}, a_{t-1}).
\end{equation}
That is to say, the occurring of the current state $s_t$ is only conditional on the last state-action pair $(s_{t-1}, a_{t-1})$ rather than the whole sequence. Now we show that the states of a search session MDP (SSMDP) also have the Markov property.

\begin{proposition}
For the search session MDP $\mathcal{M} = \langle T, \mathcal{H}, \mathcal{S}, \\ \mathcal{A}, \mathcal{R}, \mathcal{P} \rangle$ defined in Definition \ref{Def:SSMDP}, any state $s \in \mathcal{S}$ is Markovian.
\end{proposition}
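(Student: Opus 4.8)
The plan is to show that the state transition distribution ever depends only on the most recent state–action pair, by exploiting the fact that every state of an SSMDP is built from an item page history that already records the entire past of the session. First I would restrict attention to a realizable trajectory $s_0, a_0, s_1, \ldots, s_{t-1}, a_{t-1}, s_t$ and observe that each intermediate state $s_i$ with $i < t$ must be a continuation state, i.e. $s_i = C(h_i)$ for some $h_i \in \mathcal{H}_i$; otherwise $s_i$ would be a conversion or abandon state, which by Definition \ref{Def:SSMDP} is terminal and cannot be followed by an action $a_i$. This pins down the form of every state appearing in the conditioning sequence.

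The key structural observation, which I would establish next, comes from the recursive definition of item page histories (Definition \ref{Def:ItemPageHistory}): since $h_i = h_{i-1} \cup \{p_i\}$ with $h_0 = q$, the history $h_{t-1}$ is a strictly nested object that contains $h_0, h_1, \ldots, h_{t-2}$ as a prefix, together with all item pages $p_1, \ldots, p_{t-1}$. Consequently, knowing $s_{t-1} = C(h_{t-1})$ already determines $h_0, \ldots, h_{t-2}$ uniquely, hence determines the earlier states $s_0 = C(h_0), \ldots, s_{t-2} = C(h_{t-2})$ and the item pages produced by the earlier actions. In other words, the variables $s_0, a_0, \ldots, s_{t-2}, a_{t-2}$ carry no information beyond what is already encoded in $s_{t-1}$.

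Finally I would invoke the transition function in Equation (\ref{Eq:StateTransition}). Writing $h_t = (h_{t-1}, \mathcal{L}_K(\mathcal{D}_{t-1}, a_{t-1}))$, the successor $s_t$ takes the values $B(h_t)$, $L(h_t)$, or $C(h_t)$ with probabilities $b(h_t)$, $l(h_t)$, $c(h_t)$ respectively, and these depend only on $h_t$, which is a deterministic function of the pair $(h_{t-1}, a_{t-1}) = (s_{t-1}, a_{t-1})$. Since the remaining conditioning variables are redundant by the previous paragraph, we obtain $\text{Pr}(s_t \mid s_0, a_0, \ldots, s_{t-1}, a_{t-1}) = \text{Pr}(s_t \mid s_{t-1}, a_{t-1})$, which is exactly the Markov property.

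The hard part will not be any computation but making the redundancy argument airtight: I must argue carefully that conditioning on the full trajectory factors through $s_{t-1}$ because the richer history-based state screens off the past, and I should dispatch degenerate, non-realizable sequences separately (these occur with probability zero, so both conditional probabilities are vacuous or trivially equal). The essential point is that the Markov property here is \emph{engineered} into the state design: the histories are cumulative by construction, so each state is already a sufficient statistic for its own past.
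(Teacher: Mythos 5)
Your proposal is correct and follows essentially the same route as the paper's proof: both identify every intermediate state as a continuation state $C(h_i)$, exploit the cumulative nesting of item page histories so that $C(h_{t-1})$ subsumes the entire conditioning past, and then observe that the transition probabilities in Equation (\ref{Eq:StateTransition}) depend only on $h_t$, which is determined by $(s_{t-1}, a_{t-1})$. Your extra remark about dispatching non-realizable sequences is a small point of added care but does not change the argument.
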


\begin{proof}
We only need to prove that for any step $t$ ($0 \leq t \leq T$) and any possible state-action sequence $s_0, a_0, s_1, a_1, ..., s_{t-1}, a_{t-1}, s_t$ with respect to $t$, it holds that 
\begin{displaymath}
\text{Pr}(s_t | s_0, a_0, s_1, a_1, ..., s_{t-1}, a_{t-1}) = \text{Pr}(s_t | s_{t-1}, a_{t-1}).
\end{displaymath}
Note that all states except $s_t$ in the sequence $s_0, a_0, s_1, a_1, ..., s_{t-1}, \\ a_{t-1}, s_t$ must be non-terminal states. According to the state definition, for any step $t'$ ($0 < t' < t$), there must be an item page history $h_{t'}$ corresponding to the state $s_{t'}$ such that $s_{t'} = C(h(t'))$. So the state-action sequence can be rewritten as $C(h_0), a_0, C(h_1), a_1, ..., \\ C(h_{t-1}), a_{t-1}, s_t$. For any step $t'$ ($0 < t' < t$), it holds that 
\begin{displaymath}
h_{t'} = (h_{t' - 1}, \mathcal{L}_K(\mathcal{D}_{t'-1}, a_{t'-1})),
\end{displaymath}
where $\mathcal{L}_K(\mathcal{D}_{t'-1}, a_{t'-1})$ is the top $K$ list (i.e., item page) with respect to the unranked item set $\mathcal{D}_{t'-1}$ and ranking action $a_{t'-1}$ in step $(t'-1)$. Given $h_{t'-1}$, the unranked item set $\mathcal{D}_{t'-1}$ is deterministic. Thus, $h_{t'}$ is the necessary and unique result of the state-action pair $(C(h_{t'-1}), a_{t'-1})$. Therefore, the event $(C(h_{t'-1}), a_{t'-1})$ can be equivalently represented by the event $h_{t'}$, and the following derivation can be conducted: 
\begin{displaymath}
\begin{split}
&\text{Pr}(s_t | s_0, a_0, s_1, a_1, ..., s_{t-1}, a_{t-1}) \\
= &\text{Pr}(s_t | C(h_0), a_0, C(h_1), a_1, ..., C(h_{t-1}), a_{t-1}) \\
= &\text{Pr}(s_t | h_1, h_2, ..., h_{t-1}, C(h_{t-1}), a_{t-1}) \\
= &\text{Pr}(s_t | h_{t-1}, C(h_{t-1}), a_{t-1}) \\
= &\text{Pr}(s_t | C(h_{t-1}), a_{t-1}) \\
= &\text{Pr}(s_t | s_{t-1}, a_{t-1}).
\end{split}
\end{displaymath}
The third step of the derivation holds because for any step $t'$ ($0 < t' < t$), $h_{t'-1}$ is contained in $h_{t'}$. Similarly, the fourth step holds because $C(h_{t-1})$ contains the occurrence of $h_{t-1}$.
\end{proof}

\subsection{Reward Function} \label{Sec:4.2}
In a search session MDP $\mathcal{M} = \langle T, \mathcal{H}, \mathcal{S}, \mathcal{A}, \mathcal{R}, \mathcal{P} \rangle$, the reward function $\mathcal{R}$ of is a quantitative evaluation of the action performance in each state. Specifically, for any nonterminal state $s \in \mathcal{H}_C$, any action $a \in \mathcal{A}$, and any other state $s' \in \mathcal{S}$, $\mathcal{R}(s, a, s')$ is the expected value of the immediate rewards that numerically characterize the user feedback when action $a$ is taken in $s$ and the state is changed to $s'$. Therefore, we need to translate user feedback to numeric reward values that a learning algorithm can understand.

In the online LTR domain, user clicks are commonly adopted as a reward metric \cite{katariya2017stochastic,lagree2016multiple,zoghi2017online} to guide learning algorithms. However, in E-commerce scenarios, successful transactions between users (who search items) and sellers (whose items are ranked by the search engine) are more important than user clicks. Thus, our reward setting is designed to encourage more successful transactions. For any decision step $t$ ($0 \leq t < T$), any item page history $h_t \in \mathcal{H}_t$, and any action $a \in \mathcal{A}$, let $h_{t+1} = (h_t, \mathcal{L}_K(D_t, a))$. Recall that after observing the item page history $h_{t+1}$, a user will purchase an item with a conversion probability $b(h_{t+1})$. Although different users may choose different items to buy, from a statistical view, the deal prices of the transactions occurring in $h_{t+1}$ must follow an underlying distribution. We use $m(h_{t+1})$ to denote the expected deal price of $h_{t+1}$. Then for the nonterminal state $C(h_t)$ and any state $s' \in \mathcal{S}$, the reward $\mathcal{R}(C(h_t), a, s')$ is set as follows: 
\begin{equation} \label{Eq:RewardFunction}
\mathcal{R}(C(h_t), a, s') = \left\{
\begin{aligned}
&m(h_{t+1}) &\text{ if } s' = B(h_{t+1}), \\
&0 &\text{otherwise,} \quad\quad
\end{aligned}
\right.
\end{equation}
where $B(h_{t+1})$ is the terminal state which represents the conversion event of $h_{t+1}$. The agent will recieve a positive reward from the environment only when its ranking action leads to a successful transation. In all other cases, the reward is zero. It should be noted that the expected deal price of any item page history is most probably unknown beforehand. In practice, the actual deal price of a transaction can be directly used as the reward signal. 

\subsection{Discount Rate} \label{Sec:4.3}
The discount rate $\gamma$ is an important parameter of an MDP which defines the importance of future rewards in the objective of the agent (defined in Section \ref{SubSec:2.1}). For the search session MDP (SSMDP) defined in this paper, the choice of the discount rate $\gamma$ brings out a fundamental question: ``Is it necessary for the search engine agent to consider future rewards when making decisions?'' We will find out the answer and determine an appropriate value of the discount rate by analyzing how the objective of maximizing long-time accumulative rewards is related to the goal of improving the search engine's economic performance.

Let $\mathcal{M} = \langle T, \mathcal{H}, \mathcal{S}, \mathcal{A}, \mathcal{R}, \mathcal{P} \rangle$ be a search session MDP with respect to a query $q$, an item set $\mathcal{D}$ and an integer $K$ ($K > 0$). Given a fixed deterministic policy $\pi : \mathcal{S} \rightarrow \mathcal{A}$ of the agent\footnote{More accurately, the polic $\pi$ is a mapping from the nonterminal state set $\mathcal{H}_C$ to the action space $\mathcal{A}$. Our conclusion in this paper also holds for stochastic policies, but we ommit the discussion due to space limitation.}, denote the item page history occurring at step $t$ ($0 \leq t \leq T$) under $\pi$ by $h^{\pi}_t$. We enumerate all possible states that can be visited in a search session under $\pi$ in Figure \ref{Fig:Markov_Chain}. For better illustration, we show all item page histories (marked in red) in the figure. Note that they are not the states of the SSMDP $\mathcal{M}$. Next, we will rewrite $C(h^{\pi}_t)$, $c(h^{\pi}_t)$, $b(h^{\pi}_t)$, and $m(h^{\pi}_t)$ as $C^{\pi}_t$, $c^{\pi}_t$, $b^{\pi}_t$, and $m^{\pi}_t$ for simplicity.

\begin{figure}
  \centering
  \includegraphics[scale=0.26]{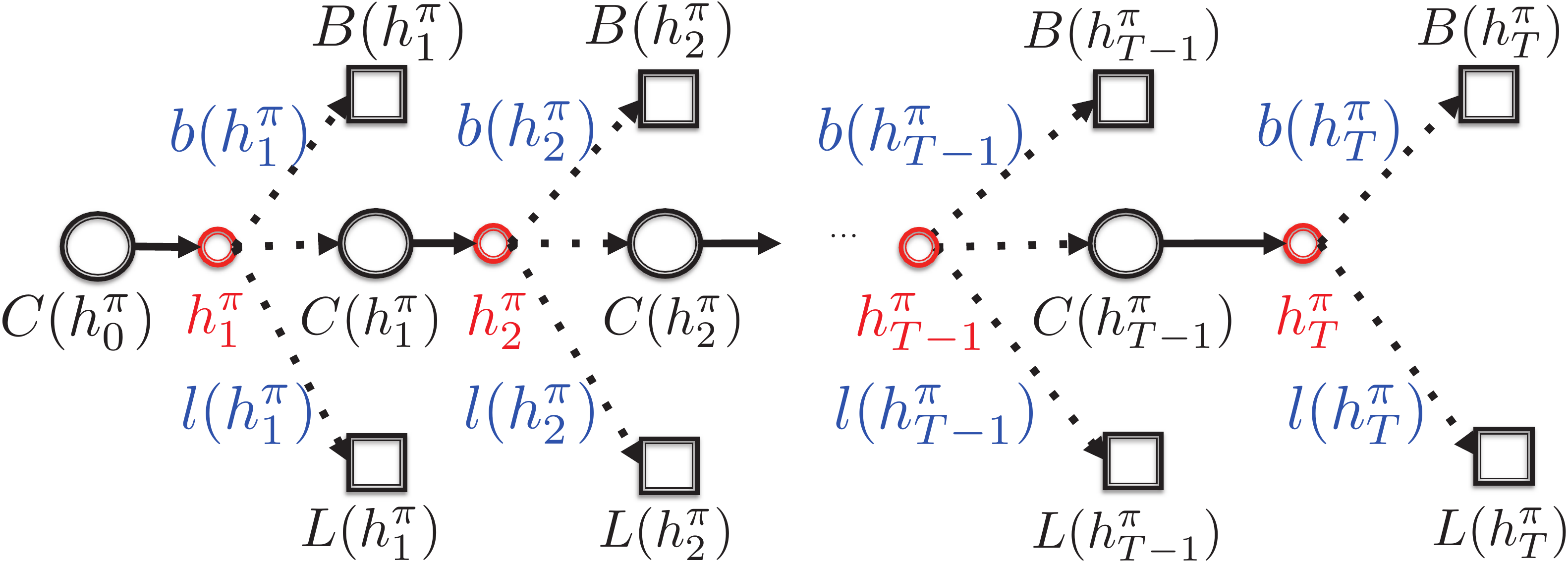}
  \vspace{-4mm}
  \caption{All states that can be visited under policy $\pi$. The black circles are nonterminal states and the black squares are terminal states. The red circles are item page histories. The solid black arrow starting from each nonterminal state represents the execution of the policy $\pi$. The dotted arrows from each item page history are state transitions, with the corresponding transition probabilities marked in blue. \label{Fig:Markov_Chain}}
  \vspace{-5mm}
\end{figure}

Without loss of generality, we assume the discount rate of the SSMDP $\mathcal{M}$ is $\gamma$ ($0 \leq \gamma \leq 1$). Denote the state value function (i.e., expected accumulative rewards) under $\gamma$ by $V^{\pi}_{\gamma}$ . For each step $t$ ($0 \leq t < T$), the state value of the nonterminal state $C^{\pi}_t$ is 
\begin{equation} \label{Eq:ValueFunction}
\begin{split}
V^{\pi}_{\gamma}(C^{\pi}_t) &= \mathbb{E}^{\pi} \big\{ \sum_{k=1}^{T-t} \gamma^{k-1} r_{t+k} \big| C^{\pi}_t \big\} \\
&= \mathbb{E}^{\pi} \big\{ r_{t+1} + \gamma r_{t+2} + \cdot\cdot\cdot + \gamma^{T-t-1} r_{T} \big| C^{\pi}_t \big\},
\end{split}
\end{equation}
where for any $k$ ($1 \leq k \leq T-t$), $r_{t+k}$ is the immediate reward recieved at the future step $(t+k)$ in the item page history $h^{\pi}_{t+k}$. According to the reward function in Equation (\ref{Eq:RewardFunction}), the expected value of the immediate reward $r_{t+k}$ under $\pi$ is 
\begin{equation} \label{Eq:ExpImmRwd}
\mathbb{E}^{\pi} \big\{ r_{t+k} \big\} = b^{\pi}_{t+k} m^{\pi}_{t+k},
\end{equation}
where $m^{\pi}_{t+k} = m(h^{\pi}_{t+k})$ is the expected deal price of the item page history $h^{\pi}_{t+k}$. However, since $V^{\pi}_{\gamma}(C^{\pi}_t)$ is the expected discounted accumulative rewards on condition of the state $C^{\pi}_t$, the probability that the item page history $h^{\pi}_{t+k}$ is reached when $C^{\pi}_t$ is visited should be taken into account. Denote the reaching probability from $C^{\pi}_t$ to $h^{\pi}_{t+k}$ by $\text{Pr}( C^{\pi}_t \rightarrow h^{\pi}_{t+k})$, it can be computed as follows according to the state transition function in Equation (\ref{Eq:StateTransition}): 
\begin{equation} \label{Eq:ReachingProb}
\text{Pr}(C^{\pi}_t \rightarrow h^{\pi}_{t+k}) = \left\{
\begin{aligned}
&1.0 &k = 1, \quad\quad\quad \\
&\Pi^{k-1}_{j=1} c^{\pi}_{t+j} &1 < k \leq T-t.
\end{aligned}
\right.
\end{equation}
The reaching probability from $C^{\pi}_t$ to $h^{\pi}_{t+1}$ is $1$ since $h^{\pi}_{t+1}$ is the directly result of the state action pair $(C^{\pi}_t, \pi(C^{\pi}_t))$. For other future item page histories, the reaching probability is the product of all continuing probabilities along the path from $C^{\pi}_{t+1}$ to $C^{\pi}_{t+k-1}$. By taking Equations (\ref{Eq:ExpImmRwd}) and (\ref{Eq:ReachingProb}) into Equation (\ref{Eq:ValueFunction}), $V^{\pi}_{\gamma}(C^{\pi}_t)$ can be further computed as follows:
\begin{equation} \label{Eq:ValueFunctionCont}
\begin{split}
V^{\pi}_{\gamma}(C^{\pi}_t) &= \mathbb{E}^{\pi} \big\{ r_{t+1} \big| C^{\pi}_t \big\} + \gamma \mathbb{E}^{\pi} \big\{r_{t+2} \big| C^{\pi}_t \big\} + \cdot\cdot\cdot \\
&+ \gamma^{k-1} \mathbb{E}^{\pi} \big\{ r_{t+k} \big| C^{\pi}_t \big\} + \cdot\cdot\cdot + \gamma^{T-t-1} \mathbb{E}^{\pi} \big\{ r_{T} \big| C^{\pi}_t \big\} \\
&= \sum^{T-t}_{k=1} \gamma^{k-1} \text{Pr}(C^{\pi}_t \rightarrow h^{\pi}_{t+k}) b^{\pi}_{t+k} m^{\pi}_{t+k} \\
&= b^{\pi}_{t+1} m^{\pi}_{t+1} + \sum^{T-t}_{k=2} \gamma^{k-1} \Big( \big( \Pi^{k-1}_{j=1} c^{\pi}_{t+j} \big) b^{\pi}_{t+k} m^{\pi}_{t+k} \Big).
\end{split}
\end{equation}
With the conversion probability and the expected deal price of each item page history in Figure \ref{Fig:Markov_Chain}, we can also derive the expected gross merchandise volume (GMV) lead by the search engine agent in a search session under the policy $\pi$ as follows: 
\begin{equation} \label{Eq:GMV}
\begin{split}
\mathbb{E}^{\pi}_{\text{gmv}} &= b^{\pi}_{1} m^{\pi}_1 + c^{\pi}_1 b^{\pi}_2 m^{\pi}_2 + \cdot\cdot\cdot + \big( \Pi_{k=1}^{T} c^{\pi}_k \big) b^{\pi}_T m^{\pi}_T \\
&= b^{\pi}_{1} m^{\pi}_1 + \sum_{k=2}^{T} \big( \Pi_{j=1}^{k-1} c^{\pi}_j \big) b^{\pi}_k m^{\pi}_k.
\end{split}
\end{equation}
By comparing Equations (\ref{Eq:ValueFunctionCont}) and (\ref{Eq:GMV}), it can be easily found that $\mathbb{E}^{\pi}_{gmv} = V^{\pi}_{\gamma}(C^{\pi}_0)$ when the discount rate $\gamma = 1$. That is to say, when $\gamma = 1$, maximizing the expected accumulative rewards directly leads to the maximization of the expected GMV. However, when $\gamma < 1$, maximizing the value function $V^{\pi}_{\gamma}$ cannot necessarily maximize $\mathbb{E}^{\pi}_{gmv}$ since the latter is an upper bound of $V^{\pi}_{\gamma}(C^{\pi}_0)$.

\begin{proposition}
Let $\mathcal{M} = \langle T, \mathcal{H}, \mathcal{S}, \mathcal{A}, \mathcal{R}, \mathcal{P} \rangle$ be a search session MDP. For any deterministic policy $\pi : \mathcal{S} \rightarrow \mathcal{A}$ and any discount rate $\gamma$ ($0 \leq \gamma \leq 1$), it is the case that $V^{\pi}_{\gamma}(C(h_0)) \leq \mathbb{E}^{\pi}_{gmv}$, where $V^{\pi}_{\gamma}$ is state value function defined in Equation (\ref{Eq:ValueFunction}), $C(h_0)$ is the initial nonterminal state of a search session, $\mathbb{E}^{\pi}_{gmv}$ is the expected gross merchandise volume (GMV) of $\pi$ defined in Equation (\ref{Eq:GMV}). Only when $\gamma = 1$, we have $V^{\pi}_{\gamma}(C(h_0)) = \mathbb{E}^{\pi}_{gmv}$.
\end{proposition}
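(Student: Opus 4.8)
The plan is to leverage the two closed-form expressions already derived in the excerpt, namely Equation~(\ref{Eq:ValueFunctionCont}) for the discounted state value and Equation~(\ref{Eq:GMV}) for the expected GMV, and to compare them term by term. First I would instantiate Equation~(\ref{Eq:ValueFunctionCont}) at the initial step $t = 0$, which (using $C^{\pi}_0 = C(h_0)$) yields
$$V^{\pi}_{\gamma}(C(h_0)) = b^{\pi}_{1} m^{\pi}_{1} + \sum_{k=2}^{T} \gamma^{k-1} \big( \Pi_{j=1}^{k-1} c^{\pi}_{j} \big) b^{\pi}_{k} m^{\pi}_{k}.$$
Placing this beside Equation~(\ref{Eq:GMV}), the two expressions share the identical leading term $b^{\pi}_1 m^{\pi}_1$ (the factor $\gamma^{0}=1$ regardless of $\gamma$) and differ only through the discount weights $\gamma^{k-1}$ that appear inside the remaining sum.

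Next I would subtract the value from the GMV and collect the result into a single sum:
$$\mathbb{E}^{\pi}_{gmv} - V^{\pi}_{\gamma}(C(h_0)) = \sum_{k=2}^{T} \big( 1 - \gamma^{k-1} \big) \big( \Pi_{j=1}^{k-1} c^{\pi}_{j} \big) b^{\pi}_{k} m^{\pi}_{k}.$$
The inequality $V^{\pi}_{\gamma}(C(h_0)) \leq \mathbb{E}^{\pi}_{gmv}$ then reduces to showing that every summand is non-negative, which follows from three elementary facts: (i) for $\gamma \in [0,1]$ and $k \geq 2$ we have $\gamma^{k-1} \leq 1$, so $1 - \gamma^{k-1} \geq 0$; (ii) each continuing probability $c^{\pi}_j$ and each conversion probability $b^{\pi}_k$ lies in $[0,1]$, so the product $\Pi_{j=1}^{k-1} c^{\pi}_j$ and the factor $b^{\pi}_k$ are non-negative; and (iii) each expected deal price $m^{\pi}_k$ is non-negative. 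A product of non-negative factors is non-negative, so the whole difference is non-negative and the bound follows.

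For the equality characterization I would analyze the same difference. Setting $\gamma = 1$ makes $1 - \gamma^{k-1} = 0$ for every $k$, so the difference vanishes identically, establishing the ``if'' direction $V^{\pi}_{1}(C(h_0)) = \mathbb{E}^{\pi}_{gmv}$. For the ``only if'' direction, I would note that when $\gamma < 1$ the factor $1 - \gamma^{k-1}$ is strictly positive for each $k \geq 2$; hence, provided at least one later item page history $h^{\pi}_k$ ($k \geq 2$) is reachable with positive continuing probability and carries positive expected conversion value $b^{\pi}_k m^{\pi}_k > 0$, the corresponding summand is strictly positive and the inequality becomes strict.

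The main obstacle, and essentially the only subtle point, is this strictness claim: the stated ``only when $\gamma = 1$'' implicitly requires the non-degeneracy assumption above, since in a trivial session where no purchase can occur beyond the first page (all later $b^{\pi}_k m^{\pi}_k = 0$) equality holds for every $\gamma$. I would therefore either make this non-degeneracy hypothesis explicit or read the statement as identifying $\gamma = 1$ as the unique discount rate that guarantees equality across all SSMDPs. Everything else is a direct term-by-term comparison of the two previously derived summations, so no additional machinery is needed.
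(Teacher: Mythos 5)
Your proposal is correct and follows essentially the same route as the paper: both compute the difference $\mathbb{E}^{\pi}_{gmv} - V^{\pi}_{\gamma}(C(h_0)) = \sum_{k=2}^{T} (1 - \gamma^{k-1}) \big( \Pi_{j=1}^{k-1} c^{\pi}_j \big) b^{\pi}_k m^{\pi}_k$ and observe that every summand is non-negative for $\gamma \in [0,1]$. Your added remark about the degenerate case is in fact a small correction to the paper, which asserts the difference is ``always positive'' when $\gamma < 1$ even though it can vanish if $b^{\pi}_k m^{\pi}_k = 0$ for all $k \geq 2$; making the non-degeneracy hypothesis explicit, as you do, is the more careful reading.
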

\begin{proof}
The proof is trivial since the difference between $\mathbb{E}^{\pi}_{gmv}$ and $V^{\pi}_{\gamma}(C(h_0))$, namely $\sum_{k=2}^{T} (1 - \gamma^{k-1} ) \big( \Pi_{j=1}^{k-1} c^{\pi}_j \big) b^{\pi}_k m^{\pi}_k$, is always positive when $\gamma < 1$.
\end{proof}
Now we can give the answer to the question proposed in the beginning of this section: considering future rewards in a search session MDP is necessary since maximizing the undiscounted expected accumulative rewards can optimize the performance of the search engine in the aspect of GMV. The sequential nature of our multi-step ranking problem requires the ranking decisions at different steps to be optimized integratedly rather than independently.

\section{Algorithm} \label{Sec:5}
In this section, we propose a policy gradient algorithm for learning an optimal ranking policy in a search session MDP (SSMDP). 
We resort to the policy gradient method since directly optimizing a parameterized policy function addresses both the policy representation issue and the large-scale action space issue of an SSMDP. Now we briefly review the policy gradient method in the context of SSMDP. Let $\mathcal{M} = \langle T, \mathcal{H}, \mathcal{S}, \mathcal{A}, \mathcal{R}, \mathcal{P} \rangle$ be an SSMDP, $\pi_{\theta}$ be the policy function with the parameter $\theta$. The objective of the agent is to find an optimal parameter which maximizes the expectation of the $T$-step returns along all possible trajectories
\begin{equation}
J(\theta) = \mathbb{E}_{\tau \sim \rho_{\theta}} \big\{ R(\tau) \big\} = \mathbb{E}_{\tau \sim \rho_{\theta}} \big\{ \sum_{t=0}^{T-1} r_t \big\},
\end{equation}
where $\tau$ is a trajectory like $s_0, a_0, r_0, s_1, a_1, ..., s_{T-1}, a_{T-1}, r_{T-1}, s_T$ and follows the trajectory distribution $\rho_{\theta}$ under the policy parameter $\theta$, $R(\tau) = \sum_{t=0}^{T-1} r_t$ is the $T$-step return of the trajectory $\tau$. Note that if the terminal state of a trajectory is reached in less than $T$ steps, the sum of the rewards will be truncated in that state. The gradient of the target $J(\theta)$ with respect to $\theta$ is 
\begin{equation}
\nabla_{\theta} J(\theta) = \mathbb{E}_{\tau \sim \rho_{\theta}} \big\{ \sum_{t=0}^{T-1} \nabla_{\theta} \log \pi_{\theta}(s_{t}, a_{t}) R_t^T(\tau) \big\},
\end{equation}
where $R_t^T(\tau) = \sum_{t'=t}^{T-1} r_{t'}$ is the sum of rewards from step $t$ to the terminal step $T$ in the trajectory $\tau$. This gradient leads to the well-known REINFORCE algorithm \cite{williams1992simple}. The policy gradient theorem proposed by \citeauthor{sutton2000policy} \cite{sutton2000policy} provides a framework which generalizes the REINFORCE algorithm. In general, the gradient of $J(\theta)$ can be written as 
\begin{displaymath}
\nabla_{\theta} J(\theta) = \mathbb{E}_{\tau \sim \rho_{\theta}} \big\{ \sum_{t=0}^{T} \nabla_{\theta} \log \pi_{\theta}(s_{t},a_{t}) Q^{\pi_{\theta}}(s_{t},a_{t}) \big\},
\end{displaymath}
where $Q^{\pi_{\theta}}$ is the state-action value function under the policy $\pi_{\theta}$. If $\pi_{\theta}$ is deterministic, the gradient of $J(\theta)$ can be rewritten as  
\begin{displaymath}
\nabla_{\theta} J(\theta) = \mathbb{E}_{\tau \sim \rho_{\theta}} \big\{ \sum_{t=0}^{T-1} \nabla_{\theta} \pi_{\theta}(s_{t}) \nabla_{a} Q^{\pi_{\theta}}(s_{t},a) {\big|}_{a=\pi_{\theta}(s_{t})} \big\}.
\end{displaymath}
\citeauthor{silver2014deterministic} \cite{silver2014deterministic} show that the deterministic policy gradient is the limiting case of the stochastic policy gradient as policy variance tends to zero. The value function $Q^{\pi_{\theta}}$ can be estimated by temporal-difference learning (e.g., actor-critic methods \cite{sutton1998reinforcement}) aided by a function approximator $Q^{w}$ with the parameter $w$ which minimizes the mean squared error $\text{MSE}(w) = \vert \vert Q^{w} - Q^{\pi_{\theta}} \vert \vert^2$.

\subsection{The DPG-FBE Algorithm}
Instead of using stochastic policy gradient algorithms, we rely on the deterministic policy gradient (DPG) algorithm \cite{silver2014deterministic} to learn an optimal ranking policy in an SSMDP since from a practical viewpoint, computing the stochastic policy gradient may require more samples, especially if the action space has many dimensions. However, we have to overcome the difficulty in estimating the value function $Q^{\pi_{\theta}}$, which is caused by the high variance and unbalanced distribution of the immediate rewards in each state. As indicated by Equation (\ref{Eq:RewardFunction}), the immediate reward of any state-action pair $(s,a)$ is zero or the expected deal price $m(h)$ of the item history page $h$ resulted by $(s,a)$. Firstly, the reward variance is high because the deal price $m(h)$ normally varies over a wide range. Secondly, the immediate reward distribution of $(s,a)$ is unbalanced because the conversion events lead by $(s,a)$ occur much less frequently than the two other cases (i.e., abandon and continuation events) which produce zero rewards. Note that the same problem also exists for the $T$-step returns of the trajectories in an SSMDP since in any possible trajectory, only the reward of the last step may be nonzero. Therefore, estimating $Q^{\pi_{\theta}}$ by Monte Carlo evaluation or temporal-difference learning may cause inaccurate update of the value function parameters and further influence the optimization of the policy parameter.

Our way for solving the above problem is similar to the model-based reinforcement learning approaches \cite{kearns2002near,BrafmanT02}, which maintain an approximate model of the environment to help with performing reliable updates of value functions. According to the Bellman Equation \cite{sutton1998reinforcement}, the state-action value of any state-action pair $(s,a)$ under any policy $\pi$ is 
\begin{equation} \label{Eq:BellmanEqQ}
Q^{\pi_{\theta}}(s,a) = \sum_{s' \in \mathcal{S}} \mathcal{P}(s,a,s') \big( \mathcal{R}(s,a,s') + \max_{a'} Q^{\pi_{\theta}}(s',a') \big),
\end{equation}
The right-hand side of Equation (\ref{Eq:BellmanEqQ}) can be denoted by $\mathcal{T} Q^{\pi_{\theta}}(s,a)$, where $\mathcal{T}$ is the Bellman operator with respect to the policy $\pi_{\theta}$. Let $h'$ be the next item page history resulted by $(s,a)$. Only the states $C(h')$, $B(h')$, and $L(h')$ can be transferred to from $(s,a)$ with nonzero probability. Among these three states, only $B(h')$ involves a nonzero immediate reward and $C(h')$ involves a nonzero $Q$-value. So the above equation can be simplified to 
\begin{equation} \label{Eq:BellmanEqSimplified}
Q^{\pi_{\theta}}(s,a) = b(h') m(h') + c(h') \max_{a'} Q^{\pi_{\theta}}\left(C(h'),a'\right),
\vspace{-1mm}
\end{equation}
where $b(h')$, $c(h')$, and $m(h')$ are the conversion probability, continuing probability and expected deal price of $h'$, respectively. Normally, the value function $Q^{\pi_{\theta}}$ can be approximated by a parameterized function $Q^w$ with an objective of minimizing the mean squared error (MSE)
\begin{displaymath}
\text{MSE}(w) = \vert \vert Q^{w} - Q^{\pi_{\theta}} \vert \vert^2 = \sum_{s \in \mathcal{S}} \sum_{a \in \mathcal{A}} \big( Q^{w}(s,a) - Q^{\pi_{\theta}}(s,a) \big)^2 .
\end{displaymath}
The derivative of $\text{MSE}(w)$ with respect to the parameter $w$ is 
\begin{displaymath}
\nabla_w \text{MSE}(w) = \sum_{s \in \mathcal{S}} \sum_{a \in \mathcal{A}} \big( Q^{\pi_{\theta}}(s,a) - Q^{w}(s,a) \big) \nabla_w Q^{w}(s,a).
\end{displaymath}
However, since $Q^{\pi_{\theta}}(s,a)$ is unknown, we cannot get the accurate value of $\nabla_w \text{MSE}(w)$. One way for solving this problem is to replace $Q^{\pi_{\theta}}$ with $\mathcal{T} Q^w$ and approximately compute $\nabla_w \text{MSE}(w)$ by 
\begin{equation*}
\begin{small}
\begin{aligned}
\sum_{s \in \mathcal{S}} \sum_{a \in \mathcal{A}} \Big( b(h')m(h') + c(h') \max_{a'} Q^{w}(s',a') - Q^{w}(s,a) \Big) \nabla_w Q^{w}(s,a),
\end{aligned}
\end{small}
\end{equation*}
where $s' = C(h')$ is the state of continuation event of $h'$. Every time a state-action pair $(s,a)$ as well as its next item page history $h'$ is observed, $w$ can be updated in a full backup manner: 
\begin{displaymath}
\Delta w \leftarrow \alpha_w \nabla_{w} Q^w(s,a) \big( b(h') m(h') + c(h') Q^w(s',a') - Q^w(s,a) \big),
\end{displaymath}
where $\alpha_w$ is a learning rate and $a' = \pi_{\theta}(s')$. With this full backup updating method, the sampling errors caused by immediate rewards or returns can be avoided. Furthermore, the computational cost of full backups in our problem is almost equal to that of one-step sample backups (e.g., Q-learning \cite{watkins1989learning}). 

\begin{algorithm}[t]
\caption{Deterministic Policy Gradient with Full Backup Estimation (DPG-FBE)}
\label{Alg:DPGFBE}
\begin{small}
\KwIn{Learning rate $\alpha_{\theta}$ and $\alpha_{w}$, pretrained conversion probability model $b$, continuing probability model $c$, and expected deal price model $m$ of item page histories}
Initialize the actor $\pi_{\theta}$ and the critic $Q^w$ with parameter $\theta$ and $w$\;
\ForEach{search session}{
  Use $\pi_{\theta}$ to sample a ranking action at each step with exploration\;
  Get the trajectory $\tau$ of the session with its final step index $t$\;
  $\Delta w \leftarrow 0, \Delta \theta \leftarrow 0$\;
  \For{$k = 0, 1, 2, ..., t-1$}{
    $(s_k, a_k, r_{k}, s_{k+1}) \leftarrow $ the sample tuple at step $k$\;
    $h_{k+1} \leftarrow $ the item page history of $s_{k}$\;
    \eIf{$s_{k+1} = B(h_{k+1})$}{
      Update the models $b$, $c$, and $m$ with the samples $(h_{k+1},1)$, $(h_{k+1},0)$, and $(h_{k+1}, r_{k})$, respectively\;
    }{
      Update the models $b$ and $c$ with the samples $(h_{k+1},0)$ and $(h_{k+1},1)$, respectively\;
    }
    $s' \leftarrow C(h_{k+1})$, $a' \leftarrow \pi_{\theta}(s')$\;
    $p_{k+1} \leftarrow b(h_{k+1}) m(h_{k+1})$\;
    $\delta_k \leftarrow p_{k+1} + c(h_{k+1}) Q^w(s',a') - Q^w(s_{k},a_{k})$\;
    $\Delta w \leftarrow \Delta w + \alpha_w \delta_k \nabla_{w} Q^w(s_{k},a_{k})$\;
    $\Delta \theta \leftarrow \Delta \theta + \alpha_{\theta} \nabla_{\theta} \pi_{\theta}(s_k) \nabla_{a} Q^w(s_k, a_k)$\;
  }
  $w \leftarrow w + \Delta w / t, \theta \leftarrow \theta + \Delta \theta / t$\;
}
\end{small}
\end{algorithm}

Our policy gradient algorithm is based on the deterministic policy gradient theorem \cite{silver2014deterministic} and the full backup estimation of the Q-value functions. Unlike previous works which entirely model the reward and state transition functions \cite{kearns2002near,BrafmanT02}, we only need to build the conversion probability model $b(\cdot)$, the continuing probability model $c(\cdot)$, and the expected deal price model $m(\cdot)$ of the item page histories in an SSMDP. These models can be trained using online or offline data by any possible statistical learning method. We call our algorithm {Deterministic Policy Gradient with Full Backup Estimation} (DPG-FBE) and show its details in Algorithm 1.

As shown in this table, the parameters $\theta$ and $w$ will be updated after any search session between the search engine agent and users. Exploration (at line $3$) can be done by, but not limited to, $\epsilon$-greedy (in discrete action case) or adding random noise to the output of $\pi_{\theta}$ (in continuous action case). Although we have no assumptions on the specific models used for learning the actor $\pi_{\theta}$ and the critic $Q^w$ in Algorithm \ref{Alg:DPGFBE}, nonlinear models such as neural networks are preferred due to the large state/action space of an SSMDP. To solve the convergence problem and ensure a stable learning process, a replay buffer and target updates are also suggested \cite{mnih2015human,lillicrap2015continuous}.

\section{Experiments}\label{Sec:6}
In this section, we conduct two groups of experiments: a simulated experiment in which we construct an online shopping simulator and test our algorithm DPG-FBE as well as some state-of-the-art online learning to rank (LTR) algorithms, and a real application in which we apply our algorithm in \textit{TaoBao}, one of the largest E-commerce platforms in the world. 

\subsection{Simulation}
The online shopping simulator is constructed based on the statistical information of items and user behaviors in \textit{TaoBao}. An item is represented by a $n$-dim ($n > 0$) feature vector  $\bm{x} = (x_1,...,x_n)^{\top}$ and a ranking action of the search engine is a $n$-dim weight vector $\bm{\mu} = (\mu_1,...,\mu_n)^{\top}$. The ranking score of the item $\bm{x}$ under the ranking action $\bm{\mu}$ is the inner product $\bm{x}^{\top} \bm{\mu}$ of the two vectors. We choose $20$ important features related to the item category of \textit{dress} (e.g., price and quality) and generate an item set $\mathcal{D}$ by sampling $1000$ items from a distribution approximated with all the items of the dress category. Each page contains $10$ items so that there are at most $100$ ranking rounds in a search session. In each ranking round, the user operates on the current item page (such as clicks, abandonment, and purchase) are simulated by a user behavior model, which is constructed from the user behavior data of the dress items in \textit{TaoBao}. The simulator outputs the probability of each possible user operation given the recent item pages examined by the user. A search session will end when the user purchases one item or leaves.

\begin{figure}
  \centering
  \includegraphics[scale=0.55,trim=0 15 0 6,clip]{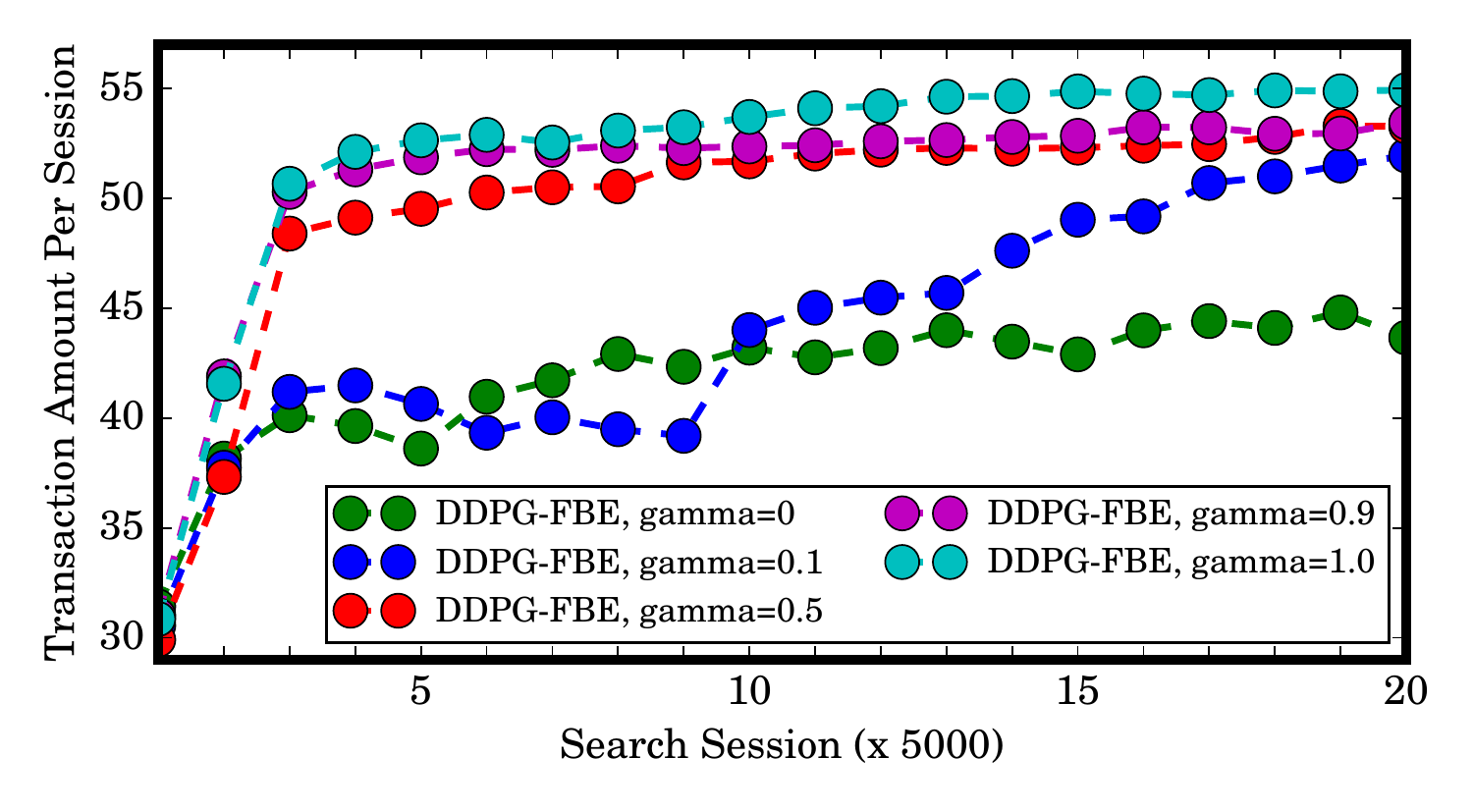}
  \vspace{-2mm}
  \caption{The learning performance of the DDPG-FBE algorithm in the simulation experiment \label{Fig:Sim_Results_DDPGFBE}}
  \vspace{-2mm}
\end{figure}

Our implementation of the DPG-FBE algorithm is a deep RL version (DDPG-FBE) which adopts deep neural networks (DNN) as the policy and value function approximators (i.e., actor and critic). We also implement the deep DPG algorithm (DDPG) \cite{lillicrap2015continuous}. The state of environment is represented by a $180$-dim feature vector extracted from the last $4$ item pages of the current search session. The actor and critic networks of the two algorithms have two full connected hidden layers with $200$ and $100$ units, respectively. We adopt \textit{relu} and \textit{tanh} as the activation functions for the hidden layers and the output layers of all networks. The network parameters are optimized by Adam with a learning rate of $10^{-5}$ for the actor and $10^{-4}$ for the critic. The parameter $\tau$ for the soft target updates \cite{lillicrap2015continuous} is set to $10^{-3}$. We test the performance of the two algorithms under different settings of the discount rate $\gamma$. Five online LTR algorithms, point-wise LTR, BatchRank \cite{zoghi2017online}, CascadeUCB1 \cite{kveton2015cascading}, CascadeKL-UCB \cite{kveton2015cascading}, and RankedExp3 \cite{radlinski2008learning} are implemented for comparison. Like the two RL algorithms, the point-wise LTR method implemented in our simulation also learns a parameterized function which outputs a ranking weight vector in each state of a search session. We choose DNN as the parameterized function and use the logistic regression algorithm to train the model, with an objective function that approximates the goal of maximizing GMV. The four other online LTR algorithms are regret minimization algorithms which are based on variants of the bandit problem model. The test of each algorithm contains $100,000$ search sessions and the transaction amount of each session is recorded. Results are averaged over $50$ runs and are shown in Figures \ref{Fig:Sim_Results_DDPGFBE}, \ref{Fig:Sim_Results_DDPG}, and \ref{Fig:Sim_Results_OLTR}.

\begin{figure}
  \centering
  \includegraphics[scale=0.55,trim=0 15 0 6,clip]{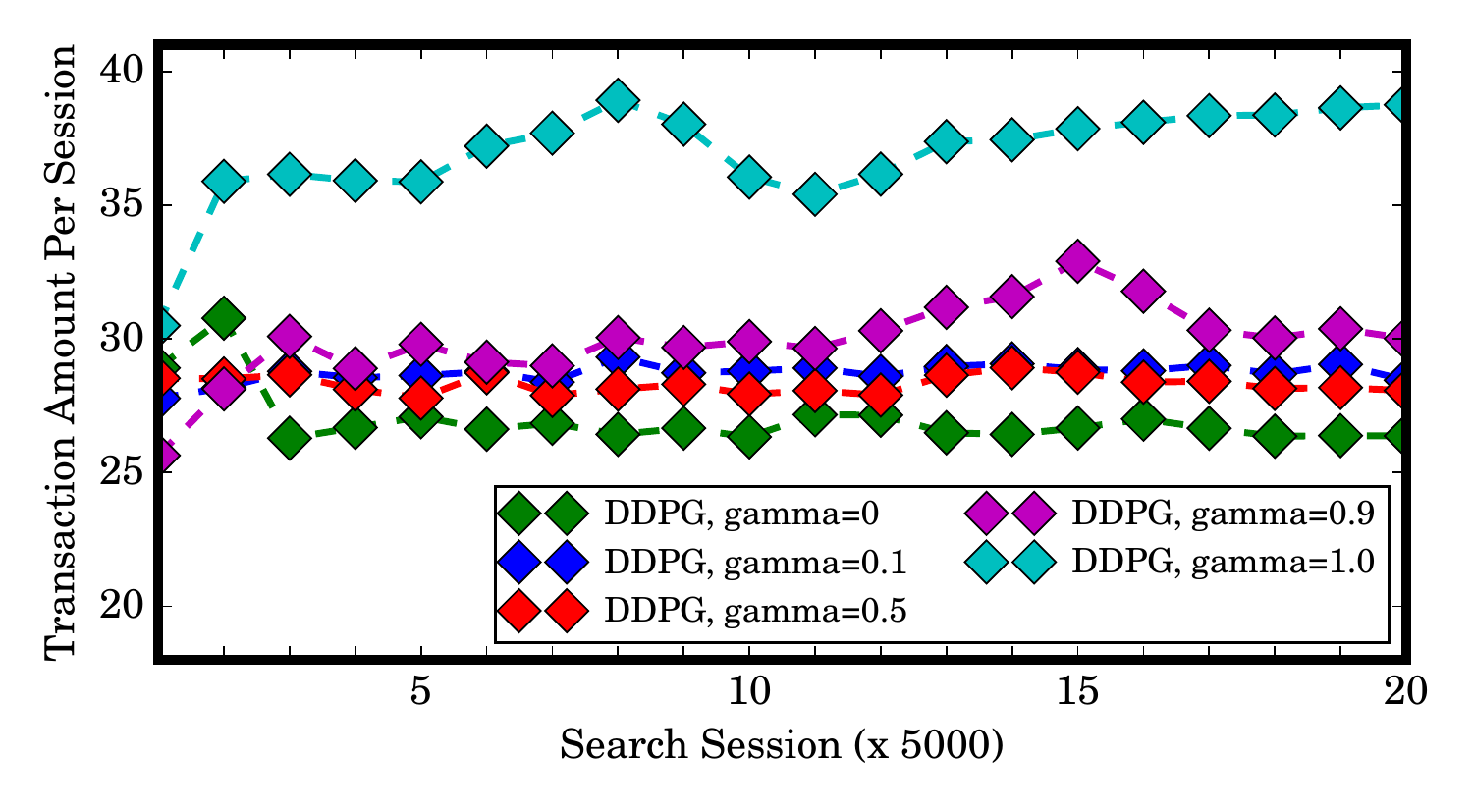}
  \vspace{-2mm}
  \caption{The learning performance of the DDPG algorithm in the simulation experiment \label{Fig:Sim_Results_DDPG}}
  \vspace{-2mm}
\end{figure}

Now let us first examine the figure of DDPG-FBE. It can be found that the performance of DDPG-FBE is improved as the discount rate $\gamma$ increases. The learning curve corresponding to the setting $\gamma = 0$ (the green one) is far below other curves in Fig. \ref{Fig:Sim_Results_DDPGFBE}, which indicates the importance of delay rewards. The theoretical result in Section \ref{Sec:4} is empirically verified since the DDPG-FBE algorithm achieves the best performance when $\gamma = 1$, with $2\%$ growth of transaction amount per session compared to the second best performance. Note that in E-commerce scenarios, even $1\%$ growth is considerable. The DDPG algorithm also performs the best when $\gamma = 1$, but it fails to learn as well as the DDPG-FBE algorithm. As shown in Fig. \ref{Fig:Sim_Results_DDPG}, all the learning curves of DDPG are under the value $40$. The point-wise LTR method also outputs a ranking weight vector while the other four online LTR algorithms can directly output a ranked item list according to their own ranking mechanisms. However, as we can observe in Fig. \ref{Fig:Sim_Results_OLTR}, the transaction amount lead by each of the algorithms is much smaller than that lead by DDPG-FBE and DDPG. This is not surprising since none of these algorithms are designed for the multi-step ranking problem where the ranking decisions at different steps should be optimized integratedly.

\subsection{Application}
We apply our algorithm in \textit{TaoBao} search engine for providing online realtime ranking service. The searching task in \textit{TaoBao} is characterized by high concurrency and large data volume. In each second, the \textit{TaoBao} search engine should respond to hundreds of thousands of users' requests in concurrent search sessions and simultaneously deal with the data produced from user behaviours. On sale promotion days such as the \textit{TMall Double $11$ Global Shopping Festival}\footnote{This refers to November the 11-th of each year. On that day, most sellers in \textit{TaoBao} and \textit{TMall} carry out sale promotion and billions of people in the world join in the online shopping festival}, both the volume and producing rate of the data would be multiple times larger than the daily values.

\begin{figure}
  \centering
  \includegraphics[scale=0.55,trim=0 15 0 6,clip]{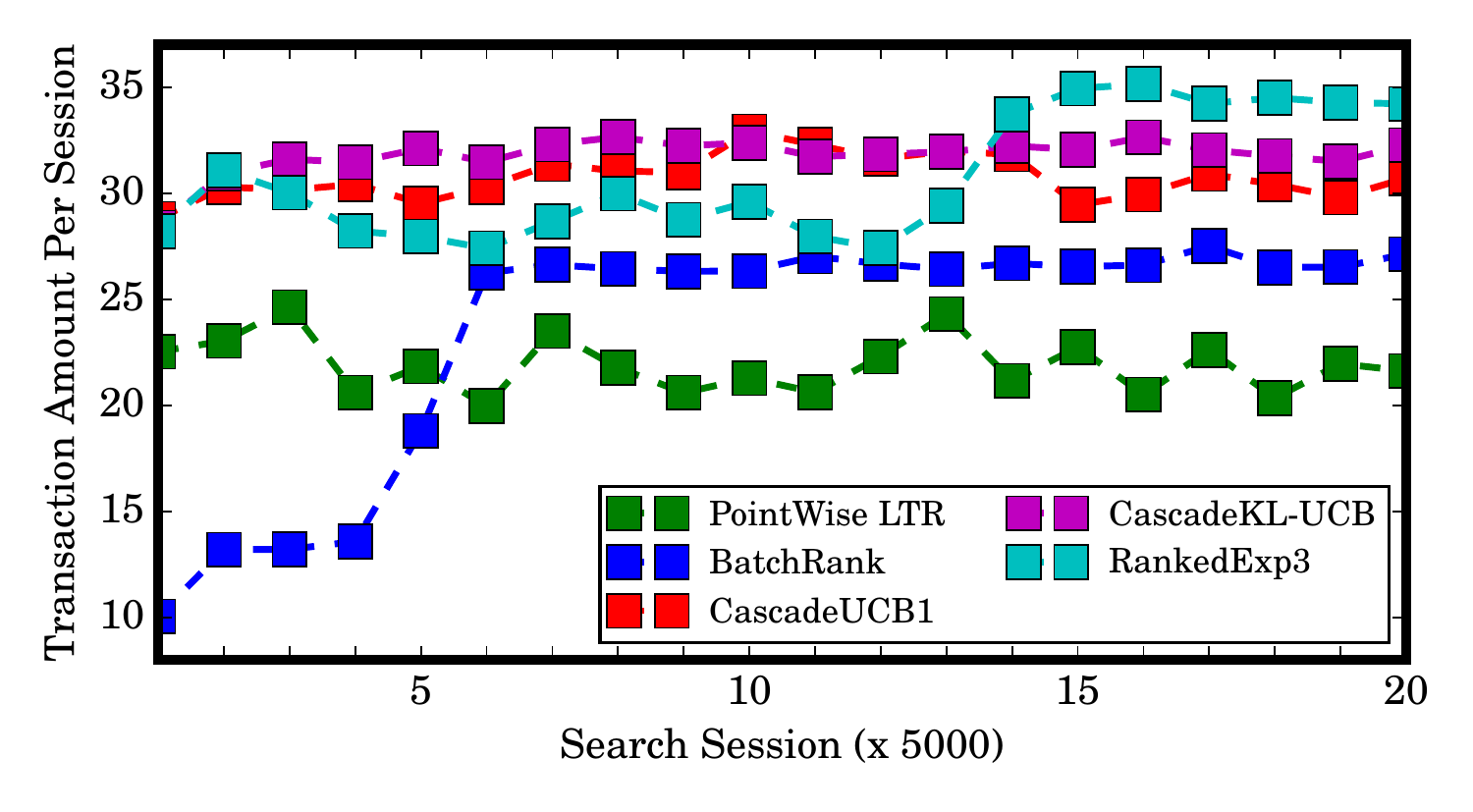}
  \vspace{-2mm}
  \caption{The learning performance of five online LTR algorithms in the simulation experiment \label{Fig:Sim_Results_OLTR}}
  \vspace{-2mm}
\end{figure}

\begin{figure}
  \centering
  \includegraphics[scale=0.28]{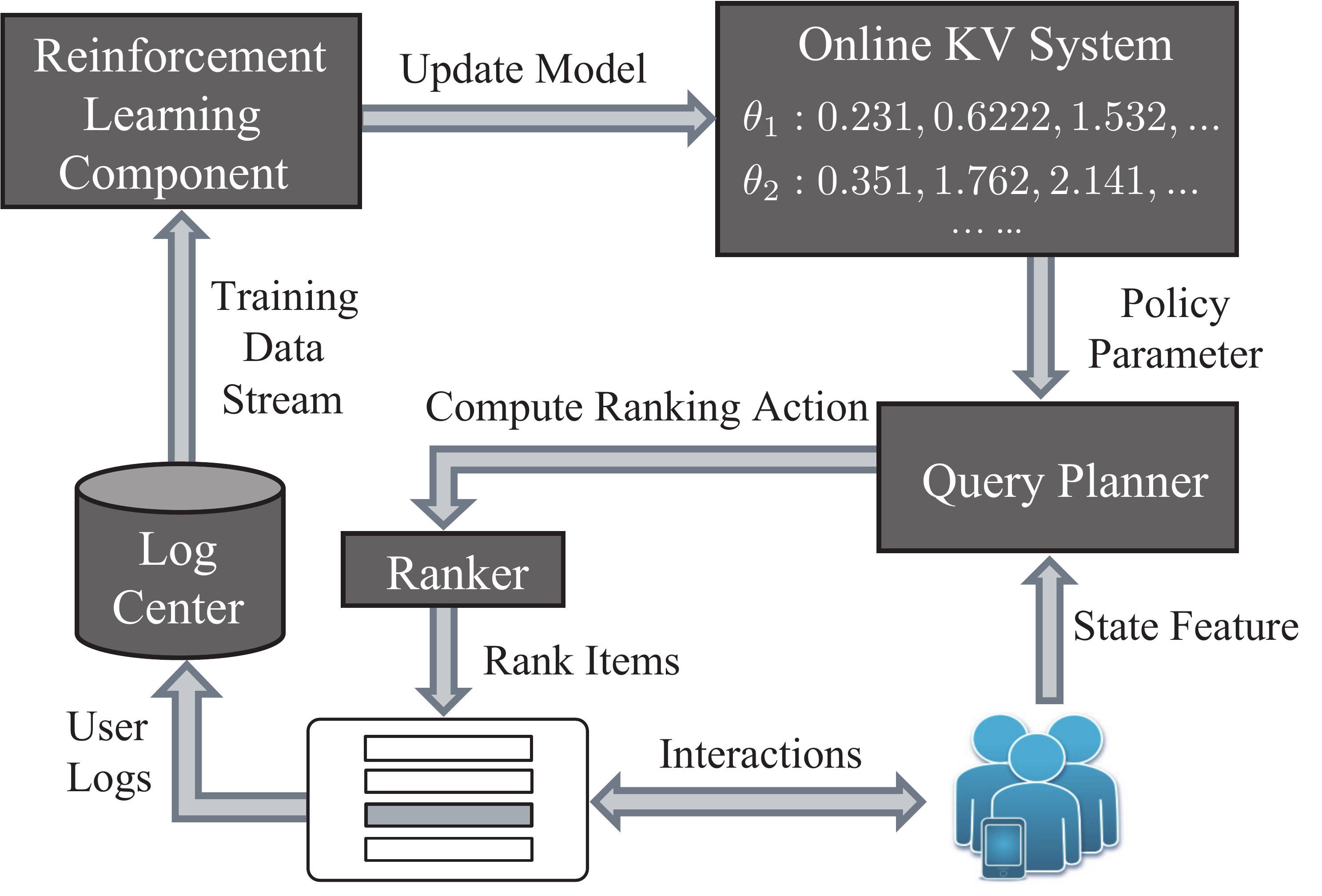}
  \vspace{-4mm}
  \caption{RL ranking system of \textit{TaoBao} search engine \label{Fig:RL_Ranking_System}}
  \vspace{-3mm}
\end{figure}

In order to satisfy the requirement of high concurrency and the ability of processing massive data in \textit{TaoBao}, we design a data stream-driven RL ranking system for implementing our algorithm DPG-FBE. As shown in Figure \ref{Fig:RL_Ranking_System}, this system contains five major components: a query planner, a ranker, a log center, a reinforcement learning component, and an online KV system. The work flow of our system mainly consists of two loops. The first one is an online acting loop (in the right bottom of Figure \ref{Fig:RL_Ranking_System}), in which the interactions between the search engine and \textit{TaoBao} users take place. The second one is a learning loop (on the left of the online acting loop in Figure \ref{Fig:RL_Ranking_System}) where the training process happens. The two working loops are connected through the log center and the online KV system, which are used for collecting user logs and storing the ranking policy model, respectively. In the first loop, every time a user requests an item page, the query planner will extract the state feature, get the parameters of the ranking policy model from the online KV system, and compute a ranking action for the current state (with exploration). The ranker will apply the computed action to the unranked items and display the top $K$ items (e.g., $K = 10$) in an item page, where the user will give feedback. In the meanwhile, the log data produced in the online acting loop is injected into the learning loop for constructing training data source. In the log center, the user logs collected from different search sessions are transformed to training samples like $(s,a,r,s')$, which are output continuously in the form of data stream and utilized by our algorithm to update the policy parameters in the learning component. Whenever the policy model is updated, it will be rewritten to the online KV system. Note that the two working loops in our system work in parallel but asynchronously, because the user log data generated in any search session cannot be utilized immediately.

The linear ranking mode used in our simulation is also adopted in this \textit{TaoBao} application. The ranking action of the search engine is a $27$-dim weight vector. The state of the environment is represented by a $90$-dim feature vector, which contains the item page features, user features and query features of the current search session. We add user and query information to the state feature since the ranking service in \textit{TaoBao} is for any type of users and there is no limitation on the input queries. We still adopt neural networks as the policy and value function approximators. However, to guarantee the online realtime performance and quick processing of the training data, the actor and critic networks have much smaller scale than those used in our simulation, with only $80$ and $64$ units in each of their two fully connected hidden layers, respectively. We implement DDPG and DDPG-FBE algorithms in our system and conduct one-week A/B test to compare the two algorithms. In each day of the test, the DDPG-FBE algorithm can lead to $2.7\% \sim 4.3\%$ more transaction amount than the DDPG algorithm \footnote{We cannot report the accurate transaction amount due to the information protection rule of Alibaba. Here we provide a reference index: the GMV achieved by Alibaba's China retail marketplace platforms surpassed $476$ billion U.S. dollars in the fiscal year of 2016 \cite{website:alizila}.}. The DDPG-FBE algorithm was also used for online ranking service on the TMall Double $11$ Global Shopping Festival of $2016$. Compared with the baseline algorithm (an LTR algorithm trained offline), our algorithm achieved more than $30\%$ growth in GMV at the end of that day.

\section{Conclusions}\label{Sec:7} 
In this paper, we propose to use reinforcement learning (RL) for ranking control in E-commerce searching scenarios. Our contributions are as follows. Firstly, we formally define the concept of search session Markov decision process (SSMDP) to formulate the multi-step ranking problem in E-commerce searching scenarios. Secondly, we analyze the property of SSMDP and theoretically prove the necessity of maximizing accumulative rewards. Lastly, we propose a novel policy gradient algorithm for learning an optimal ranking policy in an SSMDP. Experimental results in simulation and \textit{TaoBao} search engine show that our algorithm perform much better than the state-of-the-art LTR methods in the multi-step ranking problem, with more than $40\%$ and $30\%$ growth in gross merchandise volume, respectively.

\begin{acks}
We would like to thank our colleague − Yusen Zhan for useful discussions and supports of this work. We would also like to thank the anonymous referees for their valuable comments and helpful suggestions. Yang Yu is supported by Jiangsu SF (BK20160066).
\end{acks}

\bibliographystyle{ACM-Reference-Format}
\balance
\bibliography{kdd2018}


\begin{thebibliography}{33}


\ifx \showCODEN    \undefined \def \showCODEN     #1{\unskip}     \fi
\ifx \showDOI      \undefined \def \showDOI       #1{#1}\fi
\ifx \showISBNx    \undefined \def \showISBNx     #1{\unskip}     \fi
\ifx \showISBNxiii \undefined \def \showISBNxiii  #1{\unskip}     \fi
\ifx \showISSN     \undefined \def \showISSN      #1{\unskip}     \fi
\ifx \showLCCN     \undefined \def \showLCCN      #1{\unskip}     \fi
\ifx \shownote     \undefined \def \shownote      #1{#1}          \fi
\ifx \showarticletitle \undefined \def \showarticletitle #1{#1}   \fi
\ifx \showURL      \undefined \def \showURL       {\relax}        \fi
\providecommand\bibfield[2]{#2}
\providecommand\bibinfo[2]{#2}
\providecommand\natexlab[1]{#1}
\providecommand\showeprint[2][]{arXiv:#2}

\bibitem[\protect\citeauthoryear{Alizila}{Alizila}{2017}]%
        {website:alizila}
\bibfield{author}{\bibinfo{person}{Alizila}.} \bibinfo{year}{2017}\natexlab{}.
\newblock \bibinfo{title}{Joe Tsai Looks Beyond Alibaba’s RMB 3 Trillion
  Milestone}.
\newblock
  \bibinfo{howpublished}{\url{http://www.alizila.com/joe-tsai-beyond-alibabas-3-trillion-milestone/}}.
    (\bibinfo{year}{2017}).
\newblock


\bibitem[\protect\citeauthoryear{Auer}{Auer}{2002}]%
        {auer2002using}
\bibfield{author}{\bibinfo{person}{Peter Auer}.}
  \bibinfo{year}{2002}\natexlab{}.
\newblock \showarticletitle{Using confidence bounds for
  exploitation-exploration trade-offs}.
\newblock \bibinfo{journal}{\emph{Journal of Machine Learning Research}}
  \bibinfo{volume}{3}, \bibinfo{number}{Nov} (\bibinfo{year}{2002}),
  \bibinfo{pages}{397--422}.
\newblock


\bibitem[\protect\citeauthoryear{Brafman and Tennenholtz}{Brafman and
  Tennenholtz}{2002}]%
        {BrafmanT02}
\bibfield{author}{\bibinfo{person}{Ronen~I. Brafman} {and}
  \bibinfo{person}{Moshe Tennenholtz}.} \bibinfo{year}{2002}\natexlab{}.
\newblock \showarticletitle{{R-MAX} - {A} General Polynomial Time Algorithm for
  Near-Optimal Reinforcement Learning}.
\newblock \bibinfo{journal}{\emph{Journal of Machine Learning Research}}
  \bibinfo{volume}{3} (\bibinfo{year}{2002}), \bibinfo{pages}{213--231}.
\newblock


\bibitem[\protect\citeauthoryear{Burges, Shaked, Renshaw, Lazier, Deeds,
  Hamilton, and Hullender}{Burges et~al\mbox{.}}{2005}]%
        {burges2005learning}
\bibfield{author}{\bibinfo{person}{Chris Burges}, \bibinfo{person}{Tal Shaked},
  \bibinfo{person}{Erin Renshaw}, \bibinfo{person}{Ari Lazier},
  \bibinfo{person}{Matt Deeds}, \bibinfo{person}{Nicole Hamilton}, {and}
  \bibinfo{person}{Greg Hullender}.} \bibinfo{year}{2005}\natexlab{}.
\newblock \showarticletitle{Learning to rank using gradient descent}. In
  \bibinfo{booktitle}{\emph{{Proceedings of the 22nd International Conference
  on Machine Learning}}}. \bibinfo{pages}{89--96}.
\newblock


\bibitem[\protect\citeauthoryear{Cao, Xu, Liu, Li, Huang, and Hon}{Cao
  et~al\mbox{.}}{2006}]%
        {cao2006adapting}
\bibfield{author}{\bibinfo{person}{Yunbo Cao}, \bibinfo{person}{Jun Xu},
  \bibinfo{person}{Tie-Yan Liu}, \bibinfo{person}{Hang Li},
  \bibinfo{person}{Yalou Huang}, {and} \bibinfo{person}{Hsiao-Wuen Hon}.}
  \bibinfo{year}{2006}\natexlab{}.
\newblock \showarticletitle{Adapting ranking SVM to document retrieval}. In
  \bibinfo{booktitle}{\emph{{Proceedings of the 29th Annual International
  Conference on Research and Development in Information Retrieval
  (SIGIR'06)}}}. \bibinfo{pages}{186--193}.
\newblock


\bibitem[\protect\citeauthoryear{Cao, Qin, Liu, Tsai, and Li}{Cao
  et~al\mbox{.}}{2007}]%
        {cao2007learning}
\bibfield{author}{\bibinfo{person}{Zhe Cao}, \bibinfo{person}{Tao Qin},
  \bibinfo{person}{Tie-Yan Liu}, \bibinfo{person}{Ming-Feng Tsai}, {and}
  \bibinfo{person}{Hang Li}.} \bibinfo{year}{2007}\natexlab{}.
\newblock \showarticletitle{Learning to rank: from pairwise approach to
  listwise approach}. In \bibinfo{booktitle}{\emph{{Proceedings of the 24th
  International Conference on Machine Learning (ICML'07)}}}. ACM,
  \bibinfo{pages}{129--136}.
\newblock


\bibitem[\protect\citeauthoryear{Hofmann, Whiteson, and de~Rijke}{Hofmann
  et~al\mbox{.}}{2013}]%
        {hofmann2013balancing}
\bibfield{author}{\bibinfo{person}{Katja Hofmann}, \bibinfo{person}{Shimon
  Whiteson}, {and} \bibinfo{person}{Maarten de Rijke}.}
  \bibinfo{year}{2013}\natexlab{}.
\newblock \showarticletitle{Balancing exploration and exploitation in listwise
  and pairwise online learning to rank for information retrieval}.
\newblock \bibinfo{journal}{\emph{Information Retrieval}} \bibinfo{volume}{16},
  \bibinfo{number}{1} (\bibinfo{year}{2013}), \bibinfo{pages}{63--90}.
\newblock


\bibitem[\protect\citeauthoryear{Joachims}{Joachims}{2002}]%
        {joachims2002optimizing}
\bibfield{author}{\bibinfo{person}{Thorsten Joachims}.}
  \bibinfo{year}{2002}\natexlab{}.
\newblock \showarticletitle{Optimizing search engines using clickthrough data}.
  In \bibinfo{booktitle}{\emph{{Proceedings of the eighth ACM SIGKDD
  International Conference on Knowledge Discovery and Data Mining (KDD'02)}}}.
  ACM, \bibinfo{pages}{133--142}.
\newblock


\bibitem[\protect\citeauthoryear{Katariya, Kveton, Szepesvari, Vernade, and
  Wen}{Katariya et~al\mbox{.}}{2017}]%
        {katariya2017stochastic}
\bibfield{author}{\bibinfo{person}{Sumeet Katariya}, \bibinfo{person}{Branislav
  Kveton}, \bibinfo{person}{Csaba Szepesvari}, \bibinfo{person}{Claire
  Vernade}, {and} \bibinfo{person}{Zheng Wen}.}
  \bibinfo{year}{2017}\natexlab{}.
\newblock \showarticletitle{Stochastic Rank-1 Bandits}. In
  \bibinfo{booktitle}{\emph{Artificial Intelligence and Statistics}}.
  \bibinfo{pages}{392--401}.
\newblock


\bibitem[\protect\citeauthoryear{Kearns and Singh}{Kearns and Singh}{2002}]%
        {kearns2002near}
\bibfield{author}{\bibinfo{person}{Michael Kearns} {and}
  \bibinfo{person}{Satinder Singh}.} \bibinfo{year}{2002}\natexlab{}.
\newblock \showarticletitle{Near-optimal reinforcement learning in polynomial
  time}.
\newblock \bibinfo{journal}{\emph{Machine Learning}} \bibinfo{volume}{49},
  \bibinfo{number}{2-3} (\bibinfo{year}{2002}), \bibinfo{pages}{209--232}.
\newblock


\bibitem[\protect\citeauthoryear{Kveton, Szepesvari, Wen, and Ashkan}{Kveton
  et~al\mbox{.}}{2015a}]%
        {kveton2015cascading}
\bibfield{author}{\bibinfo{person}{Branislav Kveton}, \bibinfo{person}{Csaba
  Szepesvari}, \bibinfo{person}{Zheng Wen}, {and} \bibinfo{person}{Azin
  Ashkan}.} \bibinfo{year}{2015}\natexlab{a}.
\newblock \showarticletitle{Cascading bandits: Learning to rank in the cascade
  model}. In \bibinfo{booktitle}{\emph{Proceedings of the 32nd International
  Conference on Machine Learning (ICML-15)}}. \bibinfo{pages}{767--776}.
\newblock


\bibitem[\protect\citeauthoryear{Kveton, Wen, Ashkan, and Szepesvari}{Kveton
  et~al\mbox{.}}{2015b}]%
        {kveton2015combinatorial}
\bibfield{author}{\bibinfo{person}{Branislav Kveton}, \bibinfo{person}{Zheng
  Wen}, \bibinfo{person}{Azin Ashkan}, {and} \bibinfo{person}{Csaba
  Szepesvari}.} \bibinfo{year}{2015}\natexlab{b}.
\newblock \showarticletitle{Combinatorial cascading bandits}. In
  \bibinfo{booktitle}{\emph{{Advances in Neural Information Processing Systems
  (NIPS'15)}}}. \bibinfo{pages}{1450--1458}.
\newblock


\bibitem[\protect\citeauthoryear{Lagr{\'e}e, Vernade, and Cappe}{Lagr{\'e}e
  et~al\mbox{.}}{2016}]%
        {lagree2016multiple}
\bibfield{author}{\bibinfo{person}{Paul Lagr{\'e}e}, \bibinfo{person}{Claire
  Vernade}, {and} \bibinfo{person}{Olivier Cappe}.}
  \bibinfo{year}{2016}\natexlab{}.
\newblock \showarticletitle{Multiple-play bandits in the position-based model}.
  In \bibinfo{booktitle}{\emph{{Advances in Neural Information Processing
  Systems (NIPS'16)}}}. \bibinfo{pages}{1597--1605}.
\newblock


\bibitem[\protect\citeauthoryear{Langford and Zhang}{Langford and
  Zhang}{2008}]%
        {langford2008epoch}
\bibfield{author}{\bibinfo{person}{John Langford} {and} \bibinfo{person}{Tong
  Zhang}.} \bibinfo{year}{2008}\natexlab{}.
\newblock \showarticletitle{The epoch-greedy algorithm for multi-armed bandits
  with side information}. In \bibinfo{booktitle}{\emph{Advances in neural
  information processing systems}}. \bibinfo{pages}{817--824}.
\newblock


\bibitem[\protect\citeauthoryear{Li, Wu, and Burges}{Li et~al\mbox{.}}{2008}]%
        {li2008mcrank}
\bibfield{author}{\bibinfo{person}{Ping Li}, \bibinfo{person}{Qiang Wu}, {and}
  \bibinfo{person}{Christopher~J Burges}.} \bibinfo{year}{2008}\natexlab{}.
\newblock \showarticletitle{Mcrank: Learning to rank using multiple
  classification and gradient boosting}. In \bibinfo{booktitle}{\emph{{Advances
  in Neural Information Processing Systems (NIPS'08)}}}.
  \bibinfo{pages}{897--904}.
\newblock


\bibitem[\protect\citeauthoryear{Li, Wang, Zhang, and Chen}{Li
  et~al\mbox{.}}{2016}]%
        {li2016contextual}
\bibfield{author}{\bibinfo{person}{Shuai Li}, \bibinfo{person}{Baoxiang Wang},
  \bibinfo{person}{Shengyu Zhang}, {and} \bibinfo{person}{Wei Chen}.}
  \bibinfo{year}{2016}\natexlab{}.
\newblock \showarticletitle{Contextual combinatorial cascading bandits}. In
  \bibinfo{booktitle}{\emph{{International Conference on Machine Learning
  (ICML'16)}}}. \bibinfo{pages}{1245--1253}.
\newblock


\bibitem[\protect\citeauthoryear{Lillicrap, Hunt, Pritzel, Heess, Erez, Tassa,
  Silver, and Wierstra}{Lillicrap et~al\mbox{.}}{2015}]%
        {lillicrap2015continuous}
\bibfield{author}{\bibinfo{person}{Timothy~P Lillicrap},
  \bibinfo{person}{Jonathan~J Hunt}, \bibinfo{person}{Alexander Pritzel},
  \bibinfo{person}{Nicolas Heess}, \bibinfo{person}{Tom Erez},
  \bibinfo{person}{Yuval Tassa}, \bibinfo{person}{David Silver}, {and}
  \bibinfo{person}{Daan Wierstra}.} \bibinfo{year}{2015}\natexlab{}.
\newblock \showarticletitle{Continuous control with deep reinforcement
  learning}.
\newblock \bibinfo{journal}{\emph{arXiv preprint arXiv:1509.02971}}
  (\bibinfo{year}{2015}).
\newblock


\bibitem[\protect\citeauthoryear{Liu et~al\mbox{.}}{Liu et~al\mbox{.}}{2009}]%
        {liu2009learning}
\bibfield{author}{\bibinfo{person}{Tie-Yan Liu} {et~al\mbox{.}}}
  \bibinfo{year}{2009}\natexlab{}.
\newblock \showarticletitle{Learning to rank for information retrieval}.
\newblock \bibinfo{journal}{\emph{Foundations and Trends{\textregistered} in
  Information Retrieval}} \bibinfo{volume}{3}, \bibinfo{number}{3}
  (\bibinfo{year}{2009}), \bibinfo{pages}{225--331}.
\newblock


\bibitem[\protect\citeauthoryear{Maei, Szepesv{\'{a}}ri, Bhatnagar, and
  Sutton}{Maei et~al\mbox{.}}{2010}]%
        {MaeiSBS10}
\bibfield{author}{\bibinfo{person}{Hamid~R. Maei}, \bibinfo{person}{Csaba
  Szepesv{\'{a}}ri}, \bibinfo{person}{Shalabh Bhatnagar}, {and}
  \bibinfo{person}{Richard~S. Sutton}.} \bibinfo{year}{2010}\natexlab{}.
\newblock \showarticletitle{Toward off-policy learning control with function
  approximation}. In \bibinfo{booktitle}{\emph{{Proceedings of the 27th
  International Conference on Machine Learning}}}. \bibinfo{pages}{719--726}.
\newblock


\bibitem[\protect\citeauthoryear{Mnih, Kavukcuoglu, Silver, Rusu, Veness,
  Bellemare, Graves, Riedmiller, Fidjeland, Ostrovski, et~al\mbox{.}}{Mnih
  et~al\mbox{.}}{2015}]%
        {mnih2015human}
\bibfield{author}{\bibinfo{person}{Volodymyr Mnih}, \bibinfo{person}{Koray
  Kavukcuoglu}, \bibinfo{person}{David Silver}, \bibinfo{person}{Andrei~A
  Rusu}, \bibinfo{person}{Joel Veness}, \bibinfo{person}{Marc~G Bellemare},
  \bibinfo{person}{Alex Graves}, \bibinfo{person}{Martin Riedmiller},
  \bibinfo{person}{Andreas~K Fidjeland}, \bibinfo{person}{Georg Ostrovski},
  {et~al\mbox{.}}} \bibinfo{year}{2015}\natexlab{}.
\newblock \showarticletitle{Human-level control through deep reinforcement
  learning}.
\newblock \bibinfo{journal}{\emph{Nature}} \bibinfo{volume}{518},
  \bibinfo{number}{7540} (\bibinfo{year}{2015}), \bibinfo{pages}{529--533}.
\newblock


\bibitem[\protect\citeauthoryear{Nallapati}{Nallapati}{2004}]%
        {nallapati2004discriminative}
\bibfield{author}{\bibinfo{person}{Ramesh Nallapati}.}
  \bibinfo{year}{2004}\natexlab{}.
\newblock \showarticletitle{Discriminative models for information retrieval}.
  In \bibinfo{booktitle}{\emph{{Proceedings of the 27th Annual International
  ACM SIGIR Conference on Research and Development in Information Retrieval
  (SIGIR'04)}}}. ACM, \bibinfo{pages}{64--71}.
\newblock


\bibitem[\protect\citeauthoryear{Radlinski, Kleinberg, and Joachims}{Radlinski
  et~al\mbox{.}}{2008}]%
        {radlinski2008learning}
\bibfield{author}{\bibinfo{person}{Filip Radlinski}, \bibinfo{person}{Robert
  Kleinberg}, {and} \bibinfo{person}{Thorsten Joachims}.}
  \bibinfo{year}{2008}\natexlab{}.
\newblock \showarticletitle{Learning diverse rankings with multi-armed
  bandits}. In \bibinfo{booktitle}{\emph{Proceedings of the 25th international
  conference on Machine learning}}. ACM, \bibinfo{pages}{784--791}.
\newblock


\bibitem[\protect\citeauthoryear{Schulman, Levine, Abbeel, Jordan, and
  Moritz}{Schulman et~al\mbox{.}}{2015}]%
        {schulman2015trust}
\bibfield{author}{\bibinfo{person}{John Schulman}, \bibinfo{person}{Sergey
  Levine}, \bibinfo{person}{Pieter Abbeel}, \bibinfo{person}{Michael Jordan},
  {and} \bibinfo{person}{Philipp Moritz}.} \bibinfo{year}{2015}\natexlab{}.
\newblock \showarticletitle{Trust region policy optimization}. In
  \bibinfo{booktitle}{\emph{{Proceedings of the 32nd International Conference
  on Machine Learning (ICML'15)}}}. \bibinfo{pages}{1889--1897}.
\newblock


\bibitem[\protect\citeauthoryear{Silver, Huang, Maddison, Guez, Sifre, Van
  Den~Driessche, Schrittwieser, Antonoglou, Panneershelvam, Lanctot,
  et~al\mbox{.}}{Silver et~al\mbox{.}}{2016}]%
        {silver2016mastering}
\bibfield{author}{\bibinfo{person}{David Silver}, \bibinfo{person}{Aja Huang},
  \bibinfo{person}{Chris~J Maddison}, \bibinfo{person}{Arthur Guez},
  \bibinfo{person}{Laurent Sifre}, \bibinfo{person}{George Van Den~Driessche},
  \bibinfo{person}{Julian Schrittwieser}, \bibinfo{person}{Ioannis Antonoglou},
  \bibinfo{person}{Veda Panneershelvam}, \bibinfo{person}{Marc Lanctot},
  {et~al\mbox{.}}} \bibinfo{year}{2016}\natexlab{}.
\newblock \showarticletitle{Mastering the game of Go with deep neural networks
  and tree search}.
\newblock \bibinfo{journal}{\emph{Nature}} \bibinfo{volume}{529},
  \bibinfo{number}{7587} (\bibinfo{year}{2016}), \bibinfo{pages}{484--489}.
\newblock


\bibitem[\protect\citeauthoryear{Silver, Lever, Heess, Degris, Wierstra, and
  Riedmiller}{Silver et~al\mbox{.}}{2014}]%
        {silver2014deterministic}
\bibfield{author}{\bibinfo{person}{David Silver}, \bibinfo{person}{Guy Lever},
  \bibinfo{person}{Nicolas Heess}, \bibinfo{person}{Thomas Degris},
  \bibinfo{person}{Daan Wierstra}, {and} \bibinfo{person}{Martin Riedmiller}.}
  \bibinfo{year}{2014}\natexlab{}.
\newblock \showarticletitle{Deterministic policy gradient algorithms}. In
  \bibinfo{booktitle}{\emph{{Proceedings of the 31st International Conference
  on Machine Learning (ICML'14)}}}. \bibinfo{pages}{387--395}.
\newblock


\bibitem[\protect\citeauthoryear{Slivkins, Radlinski, and Gollapudi}{Slivkins
  et~al\mbox{.}}{2013}]%
        {slivkins2013ranked}
\bibfield{author}{\bibinfo{person}{Aleksandrs Slivkins}, \bibinfo{person}{Filip
  Radlinski}, {and} \bibinfo{person}{Sreenivas Gollapudi}.}
  \bibinfo{year}{2013}\natexlab{}.
\newblock \showarticletitle{Ranked bandits in metric spaces: learning diverse
  rankings over large document collections}.
\newblock \bibinfo{journal}{\emph{Journal of Machine Learning Research}}
  \bibinfo{volume}{14}, \bibinfo{number}{Feb} (\bibinfo{year}{2013}),
  \bibinfo{pages}{399--436}.
\newblock


\bibitem[\protect\citeauthoryear{Sutton and Barto}{Sutton and Barto}{1998}]%
        {sutton1998reinforcement}
\bibfield{author}{\bibinfo{person}{R.S. Sutton} {and} \bibinfo{person}{A.G.
  Barto}.} \bibinfo{year}{1998}\natexlab{}.
\newblock \bibinfo{booktitle}{\emph{Reinforcement {Learning}: An
  {Introduction}}}.
\newblock \bibinfo{publisher}{MIT Press}.
\newblock


\bibitem[\protect\citeauthoryear{Sutton, McAllester, Singh, and Mansour}{Sutton
  et~al\mbox{.}}{2000}]%
        {sutton2000policy}
\bibfield{author}{\bibinfo{person}{Richard~S Sutton}, \bibinfo{person}{David~A
  McAllester}, \bibinfo{person}{Satinder~P Singh}, {and}
  \bibinfo{person}{Yishay Mansour}.} \bibinfo{year}{2000}\natexlab{}.
\newblock \showarticletitle{Policy gradient methods for reinforcement learning
  with function approximation}. In \bibinfo{booktitle}{\emph{{Advances in
  Neural Information Processing Systems (NIPS'00)}}}.
  \bibinfo{pages}{1057--1063}.
\newblock


\bibitem[\protect\citeauthoryear{Watkins}{Watkins}{1989}]%
        {watkins1989learning}
\bibfield{author}{\bibinfo{person}{C.J.C.H. Watkins}.}
  \bibinfo{year}{1989}\natexlab{}.
\newblock \emph{\bibinfo{title}{Learning from delayed rewards}}.
\newblock \bibinfo{thesistype}{Ph.D. Dissertation}. \bibinfo{school}{King's
  College, Cambridge}.
\newblock


\bibitem[\protect\citeauthoryear{Williams}{Williams}{1992}]%
        {williams1992simple}
\bibfield{author}{\bibinfo{person}{Ronald~J Williams}.}
  \bibinfo{year}{1992}\natexlab{}.
\newblock \showarticletitle{Simple statistical gradient-following algorithms
  for connectionist reinforcement learning}.
\newblock \bibinfo{journal}{\emph{Machine learning}} \bibinfo{volume}{8},
  \bibinfo{number}{3-4} (\bibinfo{year}{1992}), \bibinfo{pages}{229--256}.
\newblock


\bibitem[\protect\citeauthoryear{Yue and Joachims}{Yue and Joachims}{2009}]%
        {yue2009interactively}
\bibfield{author}{\bibinfo{person}{Yisong Yue} {and} \bibinfo{person}{Thorsten
  Joachims}.} \bibinfo{year}{2009}\natexlab{}.
\newblock \showarticletitle{Interactively optimizing information retrieval
  systems as a dueling bandits problem}. In
  \bibinfo{booktitle}{\emph{{Proceedings of the 26th Annual International
  Conference on Machine Learning (ICML'09)}}}. ACM,
  \bibinfo{pages}{1201--1208}.
\newblock


\bibitem[\protect\citeauthoryear{Zoghi, Tunys, Ghavamzadeh, Kveton, Szepesvari,
  and Wen}{Zoghi et~al\mbox{.}}{2017}]%
        {zoghi2017online}
\bibfield{author}{\bibinfo{person}{Masrour Zoghi}, \bibinfo{person}{Tomas
  Tunys}, \bibinfo{person}{Mohammad Ghavamzadeh}, \bibinfo{person}{Branislav
  Kveton}, \bibinfo{person}{Csaba Szepesvari}, {and} \bibinfo{person}{Zheng
  Wen}.} \bibinfo{year}{2017}\natexlab{}.
\newblock \showarticletitle{Online Learning to Rank in Stochastic Click
  Models}. In \bibinfo{booktitle}{\emph{International Conference on Machine
  Learning}}. \bibinfo{pages}{4199--4208}.
\newblock


\bibitem[\protect\citeauthoryear{Zong, Ni, Sung, Ke, Wen, and Kveton}{Zong
  et~al\mbox{.}}{2016}]%
        {zong2016cascading}
\bibfield{author}{\bibinfo{person}{Shi Zong}, \bibinfo{person}{Hao Ni},
  \bibinfo{person}{Kenny Sung}, \bibinfo{person}{Nan~Rosemary Ke},
  \bibinfo{person}{Zheng Wen}, {and} \bibinfo{person}{Branislav Kveton}.}
  \bibinfo{year}{2016}\natexlab{}.
\newblock \showarticletitle{Cascading bandits for large-scale recommendation
  problems}. In \bibinfo{booktitle}{\emph{{Proceedings of the Thirty-Second
  Conference on Uncertainty in Artificial Intelligence (UAI'16)}}}.
  \bibinfo{pages}{835--844}.
\newblock


\end{thebibliography}

\end{document}